\relax
\documentclass[letterpaper]{article} 
\usepackage{aaai22}  
\usepackage{times}  
\usepackage{helvet}  
\usepackage{courier}  
\usepackage[hyphens]{url}  
\usepackage{graphicx} 
\urlstyle{rm} 
\usepackage{natbib}  
\usepackage{caption} 
\DeclareCaptionStyle{ruled}{labelfont=normalfont,labelsep=colon,strut=off} 
\frenchspacing  
\setlength{\pdfpagewidth}{8.5in}  
\setlength{\pdfpageheight}{11in}  
%

%
%
%
\pdfinfo{
/Title (Learning Augmented Algorithms for Online Steiner Tree)
/Author Chenyang Xu, Benjamin Moseley
/TemplateVersion (2022.1)
}

\setcounter{secnumdepth}{1} 

%


\usepackage{subfigure}
\usepackage{amsfonts,amsthm,amssymb}
\usepackage{amsmath}

\newtheorem{lemma}{Lemma}
\newtheorem{theorem}[lemma]{Theorem}



\newcommand{\predT}{\hat{T}}

\usepackage[ruled]{algorithm2e}


\newcommand{\cI}{{\cal I}}


\newcommand{\opt}{\textsc{OPT}{}}



\newcommand{\mst}{\mathrm{MST}}
\newcommand{\mdst}{\mathrm{MDST}}

\newcommand{\newdelta}{\mathop{\Delta}\limits}

\usepackage{color}

\title{Learning-Augmented Algorithms for Online Steiner Tree}
\author{
   Chenyang Xu\textsuperscript{\rm 1} \thanks{The corresponding author.} and
    Benjamin Moseley\textsuperscript{\rm 2} 
}

\affiliations{
    \textsuperscript{\rm 1}College of Computer Science, Zhejiang University, $xcy1995@zju.edu.cn$

\textsuperscript{\rm 2}Tepper School of Business, Carnegie Mellon University, $moseleyb@andrew.cmu.edu $ 
}

\begin{document}

    \maketitle

\begin{abstract}
This paper considers the recently popular beyond-worst-case algorithm analysis model which integrates machine-learned predictions with online algorithm design.  We consider the online Steiner tree problem in this model for both directed and undirected graphs.  Steiner tree is known to have strong  lower bounds in the online setting and any algorithm's worst-case guarantee is far from desirable.  

This paper considers algorithms that predict which terminal arrives online.  The predictions may be incorrect and the algorithms' performance is parameterized by the number of incorrectly predicted terminals. These guarantees ensure that algorithms break through the online lower bounds with good predictions and the competitive ratio gracefully degrades as the prediction error grows.  We then observe that the theory is predictive of what will occur empirically.  We show on graphs where terminals are drawn from a distribution, the new online algorithms have strong performance even with modestly correct predictions. 
\end{abstract}

\section{Introduction}

An emerging line of work on beyond-worst-case algorithms makes use of machine learning for algorithmic design.  This line of work suggests that there is an opportunity to advance the area of beyond-worst-case algorithmics and analysis by 
augmenting
combinatorial algorithms with machine learned predictions. Such  algorithms perform better than worst-case bounds with accurate predictions while retaining the worst-case guarantees even with erroneous predictions. There has been significant interest in this area (e.g.~\cite{DBLP:journals/siamcomp/GuptaR17,DBLP:conf/icml/BalcanDSV18,DBLP:conf/nips/BalcanDW18,ChawlaGTTZ19,Kraska,MLCachingLykouris,PurohitNIPS,DBLP:conf/soda/LattanziLMV20}).


\paragraph{Online Learning-Augmented Algorithms.} This paper considers the augmenting model in the online setting where algorithms make decisions over time without knowledge of the future.  In this model, an algorithm is given access to a learned prediction about the problem instance. 
The learned prediction is error prone and the performance of the algorithm is expected to be bounded in terms of the prediction's quality. The quality measure is prediction specific. 
The performance measure is the \emph{competitive ratio} where an algorithm is $c$-competitive if the algorithm's objective value is at most a $c$ factor larger than the optimal objective value \emph{for every input}. 
In the learning-augmented algorithms model, finding appropriate parameters to predict and making the algorithm robust to the prediction error are usually key algorithmic challenges.


Many online problems have been considered in this context, such as caching~\cite{MLCachingLykouris,DBLP:conf/soda/Rohatgi20,DBLP:conf/icalp/JiangP020,WeiCaching20}, page migration~\cite{IMMR20}, metrical task systems~\cite{ACEPS20}, ski rental~\cite{PurohitNIPS,DBLP:conf/icml/GollapudiP19,anand2020customizing}, scheduling~\cite{PurohitNIPS,DBLP:conf/spaa/Im0QP21}, load balancing~\cite{DBLP:conf/soda/LattanziLMV20}, online linear optimization~\cite{BhaskarOnlineLearning20},
online flow allocation~\cite{DBLP:conf/esa/LavastidaM0X21},
speed scaling~\cite{DBLP:conf/nips/BamasMRS20}, set cover~\cite{DBLP:conf/nips/BamasMS20}, and bipartite matching and secretary problems~\cite{DBLP:conf/nips/AntoniadisGKK20}.

\paragraph{The Steiner Tree Problem.} Steiner tree is one of the most fundamental combinatorial optimization problems. For undirected Steiner tree, there is an undirected graph $G= (V,E)$ where each edge $e \in E$ has a cost $c_e$ and a terminal set $T \subseteq V$. We need to buy edges in $E$ such that all terminals are connected via the bought edges and the goal is to minimize the total cost of the bought edges.
For the directed case, the edges are directed and there is a root node $r$.  In this problem all of the terminals must have a directed path to the root via the edges bought.  

Theoretically, the problem has been of interest to the community for decades, starting with the inclusion in Karp's 21 NP-Complete problems~\cite{DBLP:conf/coco/Karp72}. Since then, it has been studied extensively in approximation algorithm design~\cite{DBLP:journals/acta/KouMB81,takahashi1980approximate,wu1986faster,DBLP:conf/stoc/ByrkaGRS10}, stochastic algorithms~\cite{DBLP:conf/icalp/GuptaP05,DBLP:conf/approx/GuptaHK07,DBLP:conf/memics/KurzMZ12,DBLP:journals/coap/LeitnerLLS18} and online algorithms \cite{DBLP:journals/siamdm/ImaseW91,f3c9525d5573414ab733cd3433bdca06,DBLP:conf/esa/Angelopoulos08,DBLP:conf/dagstuhl/Angelopoulos09}. Practically, the Steiner tree problem is fundamental for many network problems such as fiber optic networks~\cite{DBLP:conf/or/BachhieslPPWS02}, social networks~\cite{DBLP:conf/aistats/ChiangLLP13,DBLP:conf/kdd/LappasTGM10}, and biological networks~\cite{DBLP:journals/bmcbi/SadeghiF13}. This problem has so many uses practically, that recently there have been competitions to find fast algorithms for it and its variants, including the 11th DIMACS Implementation Challenge (2014) and the 3rd Parameterized Algorithms and Computational Experiments (PACE) Challenge (2018).

This paper focuses on the online version of Steiner tree. 
In this case, the graph $G$ is known in advance, meaning that the edges that can be bought are completely known as well as all the nodes in the graph.  However, the nodes that actually are the terminals $T$ are unknown. The terminals in $T$ arrive one at a time. Let $t_1 , t_2, \ldots t_k$ be the arriving order of terminals, where $k=|T|$. When terminal $t_i$ arrives, it must immediately be connected to $t_1, t_2, \ldots t_{i-1}$ by buying edges of $G$ and once an edge is bought, it is irrevocable. The goal is to minimize the total cost.

The online problem occurs often in practice. For instance, when building a network often new nodes are added to the network over time. Not knowing which terminals will arrive makes the problem inherently hard. The algorithm with the best worst-case guarantees is the simple greedy algorithm~\cite{DBLP:journals/siamdm/ImaseW91}, which always chooses to connect an arriving node via the cheapest feasible path. The competitive ratios of the greedy algorithm on undirected graphs and directed graphs are, respectively,  $\Theta(\log k)$ and $\Theta(k)$, which are the best possible using worst-case analysis (see~\cite{DBLP:journals/siamdm/ImaseW91,DBLP:journals/ipl/WestbrookY95}). 
However, these results are far from desirable. The question thus looms, is there the potential to go beyond worst-case lower bounds in the learning-augmented algorithms for online Steiner tree?

\subsection{Results}
We consider the online Steiner tree problem in the learning-augmented model. 
The prediction is defined to be the set of terminals. That is, the algorithm is supplied with a set of terminals $\predT$ at the beginning of time. Some of these may be incorrect. Define the prediction error $\eta$ to be the number of incorrectly predicted terminals. Then the actual terminals in $T$ arrive online. This paper shows the following results, breaking through worst-case lower bounds.

\begin{itemize}
\item  
In the undirected case, we propose an $O(\log \eta)$-competitive algorithm.  That is, with accurate predictions, the algorithm is \emph{constant competitive}.  Then with the worst predictions, the competitive ratio is $O(\log k)$, \emph{matching the best worst-case bound}.  Between, the algorithm has slow degradation of performance in terms of the prediction error. We further show that any algorithm has competitive ratio $\Omega( \log \eta)$ with this prediction and thus our algorithm is the best possible online algorithm using this prediction.
\item In the directed case, we give an algorithm that is $O( \eta + \log k)$-competitive.   With near perfect predictions, the algorithm is $O(\log k)$-competitive, which is exponentially better than the worst-case lower bound $\Omega(k)$.  With a large prediction error, the algorithm matches the $O(k)$ bound of the best worst-case algorithm.  Between, the algorithm has slow degradation of performance in terms of the error as in the undirected case. As in the undirected case, we show that any algorithm has competitive ratio $\Omega(\eta)$ with this prediction. Our algorithm is close to the best possible when using this prediction.
\end{itemize}

The next question is if these theoretical results predict what will occur empirically on real graphs.  For the undirected case we show that with modestly accurate predictions, the algorithms indeed can outperform the baseline. Then the performance degrades as there is more error in the prediction, never becoming much worse than the baseline. These empirical results corroborate the theory. Moreover, we give a learning algorithm that is able to learn predictions from a small number of sample instances such that our Steiner tree algorithms have strong performance.

\section{Online Undirected Steiner Tree}\label{sec:undirected_case}
For the brevity of the algorithms' statement and analysis, we make two assumptions. First, we assume that $G$ is a complete graph in metric space.  This can be assumed by taking the metric completion of any input graph and is standard for the Steiner tree problem. Second, the predicted terminal set $\predT$ and the real terminal set $T$ share the same size $k$. In Appendix~\ref{sec:omit}, we show this assumption can be removed easily.
We aim to show the following theorem in this section.
	
\begin{theorem}\label{thm:main}
	Given a predicted terminal set $\predT$, there exists an algorithm with competitive ratio at most $O(\log \eta)$, where $\eta := k-|T\cap \predT|$.
\end{theorem}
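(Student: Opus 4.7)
My plan is to construct a prediction-augmented algorithm that precomputes a structural skeleton from $\predT$ offline and then commits to buying its edges lazily, falling back to the classical greedy rule whenever the skeleton is uninformative or expensive. Offline, I would compute an $O(1)$-approximate Steiner tree $\mathcal{T}_0$ on $\predT$ (via any standard algorithm, e.g., MST-based). Online, when terminal $t$ arrives, I would buy the cheaper of (i)~the path in $\mathcal{T}_0$ from $t$ to the nearest previously arrived terminal of $\predT$ (available only when $t \in \predT$ and some $\predT$-terminal has already arrived) and (ii)~the greedy shortest path from $t$ to the currently bought tree. This ``min'' rule prevents catastrophic overpayment when $\mathcal{T}_0$ routes through far-flung mispredicted Steiner waypoints, while still exploiting $\mathcal{T}_0$ whenever it is informative.

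For the analysis I would split the total cost $\mathrm{ALG} = \mathrm{ALG}_{\mathrm{pred}} + \mathrm{ALG}_{\mathrm{unp}}$ according to whether each arrival lies in $T \cap \predT$ or $T \setminus \predT$; by assumption there are exactly $\eta$ terminals of the latter type. For $\mathrm{ALG}_{\mathrm{unp}}$, I would apply the Imase--Waxman style argument to the $\eta$ unpredicted terminals: sorting their greedy costs as $d_1 \ge d_2 \ge \cdots \ge d_\eta$, the standard observation that any tree spanning $T$ must pay at least $i d_i/2$ for the $i$ farthest unpredicted terminals yields $d_i \le 2\,\opt(T)/i$, and summation gives $\mathrm{ALG}_{\mathrm{unp}} \le 2\,\opt(T)\cdot H_\eta = O(\log \eta)\cdot \opt(T)$.

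For $\mathrm{ALG}_{\mathrm{pred}}$ I aim to establish an $O(\opt(T))$ bound. Each predicted arrival pays at most its $\mathcal{T}_0$-path cost, so the total of these payments is at most the cost of the sub-tree of $\mathcal{T}_0$ spanning $T \cap \predT$. The crux is to show this sub-tree has cost $O(\opt(T))$: edges of $\mathcal{T}_0$ whose sole purpose is to attach a mispredicted leaf in $\predT \setminus T$ are never bought (no path between correctly predicted terminals uses them), and the remaining edges are controlled by the approximation guarantee of $\mathcal{T}_0$ together with the metric structure. When the $\mathcal{T}_0$-route would still be a wasteful detour through a $\predT \setminus T$ Steiner node, the greedy branch of the min rule activates and caps the per-arrival cost at the direct metric distance, preventing further inflation.

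The main obstacle I expect is precisely this bound on $\mathrm{ALG}_{\mathrm{pred}}$: a naive analysis yields either $O(\opt(\predT))$ (using $\mathcal{T}_0$ alone, which can be arbitrarily worse than $\opt(T)$ when $\predT \setminus T$ is far from $T$) or $O(\log k)\cdot \opt(T)$ (pure greedy), neither of which suffices. Showing that the per-arrival minimum successfully interpolates requires a careful amortization across the edges of $\mathcal{T}_0$ actually purchased, likely combined with an Imase--Waxman-style potential. Once this is in place, summing $\mathrm{ALG}_{\mathrm{pred}} + \mathrm{ALG}_{\mathrm{unp}}$ yields the claimed $O(\log \eta)$ competitive ratio.
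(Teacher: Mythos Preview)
Your overall architecture (precompute a skeleton on $\predT$, fall back to greedy) and your treatment of the unpredicted terminals are both correct and match the paper. The gap is entirely in the predicted part, and it is a real one.

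First, your target bound $\mathrm{ALG}_{\mathrm{pred}}=O(\opt(T))$ is too strong, and your sketch does not support it. Your key heuristic, that ``edges of $\mathcal{T}_0$ whose sole purpose is to attach a mispredicted leaf in $\predT\setminus T$ are never bought,'' only handles mispredictions that are \emph{leaves} of $\mathcal{T}_0$. Nothing prevents mispredicted vertices from being \emph{internal} in $\mathcal{T}_0$; then every $\mathcal{T}_0$-route between two correctly predicted terminals passes through them, and those detour edges are genuinely on the paths you propose to buy. Your appeal to ``the approximation guarantee of $\mathcal{T}_0$'' only gives $c(\mathcal{T}_0)=O(\opt(\predT))$, not $O(\opt(T))$, and you yourself note that $\opt(\predT)$ can be arbitrarily larger than $\opt(T)$. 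The min with greedy caps each step at the greedy cost, but summing greedy costs over the $k-\eta$ predicted arrivals gives $O(\log k)\cdot\opt$, not $O(\log\eta)\cdot\opt$. So neither branch of the min, nor any argument you have written, yields the required bound; the ``careful amortization'' you allude to is the whole problem.

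The paper closes this gap with two ideas you are missing. Analytically, it proves a structural lemma: the edges of $\mst(\predT)$ can be partitioned into a set $E_0$ with $c(E_0)\le 2\,\opt(T)$ and a set $E_1$ with $|E_1|\le\eta$ (obtained by swapping edges of $\mst(\predT\cap T)$ into $\mst(\predT)$). Algorithmically, it does \emph{not} take the min; when $t_i\in\predT$ it buys \emph{both} the greedy edge $e_i$ and a truncated prefix $P_i'$ of the $\mst(\predT)$-path with $c(P_i')\in[c_{e_i},2c_{e_i}]$. The truncation guarantees each iteration costs $O(c_{e_i})$ while still consuming a fresh, edge-disjoint piece of $\mst(\predT)$. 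Now an iteration is ``good'' if $P_i'\subseteq E_0$ (these sum to $O(\opt)$ by disjointness) and ``bad'' otherwise; since the $P_i'$ are disjoint and $|E_1|\le\eta$, there are at most $\eta$ bad iterations, and their total greedy cost is $O(\log\eta)\cdot\opt$ by the same Imase--Waxman argument you used for $\mathrm{ALG}_{\mathrm{unp}}$. The upshot is that $\mathrm{ALG}_{\mathrm{pred}}=O(\log\eta)\cdot\opt$, not $O(\opt)$, and this is what combines with your $\mathrm{ALG}_{\mathrm{unp}}$ bound to give the theorem. Your min rule lacks the ``always consume a disjoint tree piece'' property, which is exactly what makes the $\le\eta$ bad-iteration count go through.
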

	

\subsection{Preliminaries}
\label{sec:prelim}

The input is an undirected graph $G=(V,E)$, where each edge $e$ has cost $c_e\geq 0$, 
and a terminal set $T\subseteq V$ that arrives online. Recall $k:=|T| = |\hat{T}|$. When a terminal $t$ arrives, we must buy some edges such that it is connected with all previous terminals in the subgraph formed by bought edges. 
The goal is to minimize the total cost of the bought edges.

In the analysis, we will leverage results on the online greedy algorithm.  The following theorem was shown in \cite{DBLP:journals/siamdm/ImaseW91}. The traditional online greedy algorithm maintains a tree $T$ connecting all the terminals. This tree is initialized to $\emptyset$.  Then when a terminal $t$ arrives, 
the edges on the shortest path from $t$ to any node in $T$ will be added into $T$. 

\begin{theorem}[\cite{DBLP:journals/siamdm/ImaseW91}] \label{thm:greedy}
The online greedy algorithm is $O(\log k)$-competitive. 
\end{theorem}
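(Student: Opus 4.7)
The plan is to follow the classical cost-charging argument of Imase and Waxman. Let $r_i$ denote the cost of the cheapest path that greedy buys when the $i$-th terminal $t_i$ arrives, so the algorithm's total cost equals $\sum_{i=1}^{k} r_i$, and the goal is to bound this sum by $O(\log k) \cdot \opt$, where $\opt$ is the cost of the optimal offline Steiner tree on $T$.

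First, I would re-index the terminals in non-increasing order of their insertion cost: let $r_{(1)} \geq r_{(2)} \geq \cdots \geq r_{(k)}$, and let $t_{(j)}$ denote the corresponding terminal. The key structural claim is that the $j$ most expensive terminals $\{t_{(1)}, \ldots, t_{(j)}\}$ form a pairwise $r_{(j)}$-separated set in $G$: for any two such terminals $t_{(a)}$ and $t_{(b)}$, the one that arrived second paid at most the distance to the one that arrived earlier (which was already on the greedy tree at that moment), so $d(t_{(a)}, t_{(b)}) \geq \min(r_{(a)}, r_{(b)}) \geq r_{(j)}$.

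Second, I would use $\opt$ to upper bound $r_{(j)}$ via a standard Eulerian traversal argument. Doubling every edge of the optimal Steiner tree produces an Eulerian multigraph of total weight $2\opt$ admitting an Euler tour that visits every terminal. Short-cutting this tour using the triangle inequality (valid because $G$ is a metric) yields a closed tour through the $j$ pairwise-separated terminals of length at most $2\opt$. Since consecutive visits along this tour are at distance at least $r_{(j)}$, the tour length is also at least $j \cdot r_{(j)}$, giving $r_{(j)} \leq 2\opt / j$. Summing yields $\sum_{i=1}^{k} r_i = \sum_{j=1}^{k} r_{(j)} \leq \sum_{j=1}^{k} 2\opt / j = 2 H_k \cdot \opt = O(\log k) \cdot \opt$.

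The only nontrivial step is the separation claim, which rests squarely on the defining property of greedy — always picking the \emph{shortest} path to the current tree. Everything else is a harmonic-sum calculation and a short-cutting argument for which the metric-completion assumption of Section~\ref{sec:undirected_case} is exactly what is needed. I would also note that this proof directly motivates the $O(\log \eta)$ target of Theorem~\ref{thm:main}: in the prediction-augmented setting only the $\eta$ mispredicted terminals should contribute a fresh well-separated set, so the same harmonic sum ought to truncate at $\eta$ rather than $k$.
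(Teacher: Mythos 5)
Your argument is correct and is essentially the classical Imase--Waxman charging proof that the paper cites for this theorem without reproducing it: the pairwise $r_{(j)}$-separation of the $j$ costliest terminals, the doubled-tree Euler tour short-cut giving $r_{(j)} \leq 2\opt/j$, and the harmonic sum. The only cosmetic gap is the degenerate case $j=1$, where a closed tour on one vertex has no edges; handle it by noting directly that $r_{(1)} \leq \opt$, since any terminal other than the very first can always connect to the first-arrived terminal at cost at most the optimum.
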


 We will also use the following properties of minimum spanning trees.
 
 \begin{lemma}\label{lem:cycle}
Consider an offline Steiner tree instance. 
A minimum spanning tree $\mst(T)$ on terminals is a 2-approximated solution~\cite{DBLP:journals/acta/KouMB81}.
In addition, for any edge $e \notin \mst(T)$, the cost of $e$ is at least as large as the minimum cost of edges in the unique cycle in $\mst(T) \cup \{e\}$~\cite{schrijver2003combinatorial}.

 \end{lemma}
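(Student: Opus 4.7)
The lemma bundles two classical facts about minimum spanning trees on the terminal set, so my plan is to handle them separately, citing only the stated metric assumption on $G$ and the definition of $\mst(T)$.

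For the first claim (the 2-approximation), the plan is the standard \emph{double-tree} argument adapted to Steiner tree. Let $\opt$ denote an optimal Steiner tree for $T$, with cost $c(\opt)$. First I would take the multigraph obtained by duplicating every edge of $\opt$; since every vertex then has even degree and the multigraph is connected on $V(\opt) \supseteq T$, it admits an Eulerian tour of total cost $2c(\opt)$. Next I would shortcut this tour to visit the terminals in $T$ in the order of their first appearance, skipping over Steiner points and revisits. Because $G$ is a complete metric graph, each shortcut replaces a sub-walk by a single edge of no greater cost, so the resulting Hamiltonian cycle on $T$ has cost at most $2c(\opt)$. Deleting any one edge from this cycle yields a spanning tree on $T$ of cost at most $2c(\opt)$, and since $\mst(T)$ is by definition the cheapest such tree, $c(\mst(T)) \le 2c(\opt)$.

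For the second claim (the cycle property), I would argue by an exchange argument. Let $e \notin \mst(T)$ and let $C$ be the unique cycle created in $\mst(T) \cup \{e\}$. Suppose for contradiction that every edge $e' \in C \setminus \{e\}$ satisfies $c_{e'} > c_e$. Pick any such $e'$; then $(\mst(T) \setminus \{e'\}) \cup \{e\}$ is again a spanning tree on $T$ (removing one edge from a cycle leaves a connected acyclic graph), and its total cost is strictly less than $c(\mst(T))$, contradicting the optimality of $\mst(T)$. Hence some edge of $C$ has cost at most $c_e$, i.e., $c_e$ is at least the minimum edge cost on $C$, which is exactly what the lemma asserts.

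Neither half of the lemma hides a genuine obstacle: the first part is the textbook reduction from Steiner tree to MST on terminals, and the second part is a one-line exchange argument. The only place to be careful is in the shortcutting step of part one, where I must invoke the metric completion assumption explicitly so that each shortcut is cost-nonincreasing; that is the substantive (though still routine) technical point in the proof.
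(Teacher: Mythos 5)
Your proof is correct and uses exactly the standard arguments that the paper simply cites rather than proves (the doubled-Euler-tour shortcutting in the metric completion for the $2$-approximation, and the MST exchange argument for the cycle property), so there is nothing to compare against in the paper itself. One remark worth noting: where the lemma is later invoked (in Lemma~\ref{lem:edgecost}) the paper actually needs the stronger conclusion that \emph{every} edge of the cycle other than $e$ has cost at most $c_e$, and your exchange argument delivers exactly that if you run it directly for each edge $e'$ of the cycle (swapping $e'$ for $e$ gives a spanning tree, so $c_{e'} \le c_e$) instead of phrasing it as a contradiction that only certifies the existence of one such edge.
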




\subsection{Warm-up: Analysis of a Simple Online Algorithm}
Towards proving Theorem~\ref{thm:main}, we first introduce a simple and natural algorithm whose competitive ratio is $O(\eta)$.  This is a far worse guarantee than the algorithm we  develop, but it will help build our techniques and give the intuition. 

Intuitively, if the prediction is error-free, the instance becomes an offline problem. Several constant approximation algorithms can be employed for the offline case. For example, we compute a minimum spanning tree $\mst(\predT)$ on the accurate predicted terminal set $\predT$ and each time when a new terminal arrives, connect it with all previous terminals only using the edges in $\mst(\predT)$. This algorithm obtains a competitive ratio 2 if $\predT = T$.

Inspired by this, a natural online algorithm is the following. This algorithm has poor performance when the error in the predictions is large. This will then lead us to develop a more robust algorithm.

\medskip
\noindent \textbf{Online Algorithm with Predicted Terminals (OAPT)\footnote{The pseudo-code of all proposed algorithms in this paper are provided in Appendix~\ref{sec:code}.}:}  
Let $\predT$ be the predicted set of terminals and $\mst(\predT)$ be the minimum spanning tree on $\predT$.  Let $T_{i}$ be the first set of $i$ terminals that arrive online.  $T_k$ contains all online terminals.

Initialize $A = \emptyset$ to be the tree that the algorithm will construct connecting the online terminals. The algorithm returns the set of edges in $A$ after all terminals arrive.  
We divide the edges of $A= A_1 \cup A_2$ into two sets, $A_1$ and $A_2$ depending on the case that causes us to add edges to $A$.    Consider when terminal $t_i$ arrives.

\begin{itemize}
\item \textbf{Case 1:}  If $t_i \notin \predT$ or $t_i$ is the first terminal in $\predT$ to arrive, add to $A_1$ the shortest edge in $G$ connecting $t_i$ to terminals $T_{i-1}$ that have arrived. No edge is bought if this is the first terminal that arrives. 
\item \textbf{Case 2:} Otherwise, add the shortest path in $\mst(\predT)$ to $A_2$ which connects $t_i$ to a terminal in $\predT \cap T_{i-1}$.  In other words, buy the shortest path in $\mst(\predT)$ connecting $t_i$ to a \emph{predicted} terminal that has previously arrived.
\end{itemize}

 
 

Our goal is to show that the competitive ratio of this algorithm is exactly $\Theta(\eta)$.  

\begin{theorem}\label{thm:oapt}
The competitive ratio of  OAPT is $\Theta(\eta)$. 
\end{theorem}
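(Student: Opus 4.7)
The plan is to prove the upper bound $O(\eta)$ and the matching lower bound $\Omega(\eta)$ separately. For the upper bound, I partition the algorithm's output as $A = A_1 \cup A_2$. Case~1 is triggered at most $\eta + 1$ times---once for each $t_i \in T \setminus \predT$, plus once for the first terminal in $\predT$ to arrive---so $|A_1| \leq \eta + 1$. Since $G$ is a complete metric graph, the shortest edge from any arriving $t_i$ to a previously-arrived terminal is bounded by the direct distance $d(t_i, t_j)$ to any other terminal in $T$, which is at most $2\,\opt$ by a standard Eulerian-tour argument on the optimal Steiner tree. Therefore $\mathrm{cost}(A_1) \leq 2(\eta+1)\,\opt = O(\eta)\,\opt$.

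For $A_2$, I first observe that because $\mst(\predT)$ is a tree, the greedy procedure in Case~2 produces exactly the minimal subtree $P$ of $\mst(\predT)$ spanning $T \cap \predT$ (the ``pruned subtree''). Next I bound $\mathrm{cost}(P)$ using Lemma~\ref{lem:cycle}: every edge in $P$ lies on the $\mst(\predT)$-path between two shared terminals $t_a, t_b \in T \cap \predT$, so by the MST bottleneck property its weight is at most $d(t_a, t_b) \leq 2\,\opt$. A structural argument then aggregates these bounds: the pruned subtree contains at most $\eta$ Steiner vertices drawn from $\predT \setminus T$, and amortizing each tree path against the corresponding terminal-pair direct distance yields $\mathrm{cost}(A_2) = O(\eta)\,\opt$.

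For the $\Omega(\eta)$ lower bound, I construct a single adversarial instance. Take nodes $v_0, v_1, \ldots, v_{k-1}$ arranged as a chain with consecutive edges of weight $1$, plus a direct edge $(v_0, v_{k-1})$ of weight $1 + \delta$ for a tiny $\delta > 0$ so that $\mst(\predT)$ uniquely prefers the full chain. Add $k - 2$ auxiliary nodes $u_1, \ldots, u_{k-2}$, each at distance $\epsilon$ from $v_0$, and take the metric completion. Set $\predT = \{v_0, \ldots, v_{k-1}\}$ and $T = \{v_0, v_{k-1}, u_1, \ldots, u_{k-2}\}$, giving $\eta = k - 2$. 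With arrival order $v_0, u_1, \ldots, u_{k-2}, v_{k-1}$, the final arrival triggers Case~2 and OAPT buys the entire chain at cost $k - 1$; meanwhile $\opt \leq (1 + \delta) + (k-2)\epsilon = O(1)$ via the direct edge plus leaf attachments. The resulting ratio is $\Theta(k) = \Theta(\eta)$.

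The most delicate step is the aggregated bound on $\mathrm{cost}(A_2)$. Simply multiplying the per-edge bound $2\,\opt$ by the trivial count of $k - 1$ edges in $P$ only gives $O(k)\,\opt$, which is too weak when $\eta = o(k)$. Getting the tight $O(\eta)\,\opt$ requires exploiting that $P$'s internal non-terminal nodes all come from $\predT \setminus T$ (at most $\eta$ of them) and carefully amortizing the edge weights on long tree paths against the direct metric distances among the $k - \eta$ shared terminals; this amortization is the heart of the argument. The lower bound, by contrast, is essentially a one-shot construction once one spots that an MST ``preferring'' the long chain over a slightly more expensive direct edge creates the desired gap.
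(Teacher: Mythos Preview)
Your lower-bound construction is correct and is essentially the same device the paper uses: force $\mst(\predT)$ to be a long chain that OAPT must buy in full when the second shared terminal arrives, while the optimum shortcuts across. Your $c(A_1)=O(\eta)\,\opt$ bound is also fine; the paper in fact sharpens it to $O(\log\eta)\,\opt$ by observing that Case~1 is exactly the online greedy algorithm run on at most $\eta+1$ terminals (Lemma~\ref{lem:greedy}), but your cruder per-edge bound suffices for the $\Theta(\eta)$ statement.

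The genuine gap is in the $A_2$ bound. You correctly note that every edge of the pruned subtree $P$ has weight at most $2\,\opt$ by the MST bottleneck property, and that $P$ contains at most $\eta$ Steiner vertices from $\predT\setminus T$. But neither fact, nor their combination, yields $c(P)=O(\eta)\,\opt$ without a further idea, and the ``amortization'' you allude to is never supplied. Worse, the natural reading of your hint---bound the cost of each $\mst(\predT)$-path between shared terminals $t_a,t_b$ by the direct distance $d(t_a,t_b)$---is false: the bottleneck property bounds the \emph{maximum} edge on the path, not the \emph{sum}, and an MST path between two shared terminals can have total cost $\Theta(k)\cdot d(t_a,t_b)$. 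The Steiner-vertex count also does not translate into an edge count in any direct way, since Steiner vertices may have arbitrary degree in $P$.

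The paper fills this gap with a matroid-style exchange between $\mst(\predT)$ and $\mst(\predT\cap T)$ (Lemma~\ref{lem:mst}). One iteratively adds each edge $f$ of $\mst(\predT\cap T)$ to $\mst(\predT)$ and removes some edge $e$ of $\mst(\predT)\setminus\mst(\predT\cap T)$ on the resulting cycle; by Lemma~\ref{lem:cycle}, $c_e\le c_f$. This produces a set $E'\subseteq\mst(\predT)$ of exactly $k-\eta-1$ edges with $c(E')\le c(\mst(\predT\cap T))\le 2\,\opt$. Writing $B_1=A_2\cap E'$ and $B_2=A_2\setminus E'$ gives $c(B_1)\le 2\,\opt$ and $|B_2|\le|\mst(\predT)\setminus E'|=\eta$; combined with the per-edge bound $c_e\le\opt$ (Lemma~\ref{lem:edgecost}, your bottleneck observation), this yields $c(B_2)\le\eta\,\opt$. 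That exchange is the missing ``heart of the argument.''
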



First we observe that the algorithm is no better than $\Omega(\eta)$-competitive.  This lower bound will motivate the design of a more robust algorithm in the next section.  

\begin{lemma}\label{lem:lower}
The competitive ratio of  OAPT is $\Omega(\eta)$.
\end{lemma}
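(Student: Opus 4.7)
The plan is to exhibit a single concrete instance where OAPT's cost is $\Omega(\eta)$ times $\opt$. Take a graph on vertices $\{c\}\cup\{p_1,\ldots,p_k\}\cup\{q_1,\ldots,q_{k-2}\}$ with metric $d(c,p_i)=L$, $d(c,q_j)=\epsilon$, $d(p_i,q_j)=L+\epsilon$, $d(q_i,q_j)=2\epsilon$, $d(p_i,p_j)=2L$ for $|i-j|\ge 2$, and $d(p_i,p_{i+1})=2L-i\delta$ for tiny $\delta>0$. All triangle inequalities hold for $\delta,\epsilon$ small. Let $\predT=\{p_1,\ldots,p_k\}$ and $T=\{p_1,p_k,q_1,\ldots,q_{k-2}\}$, so $\eta=k-2$, with online arrival order $p_1,p_k,q_1,\ldots,q_{k-2}$.

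The perturbation $-i\delta$ on consecutive pairs makes the Hamiltonian path $p_1\!-\!p_2\!-\!\cdots\!-\!p_k$ the unique $\mst(\predT)$, so the shortest $\mst(\predT)$-path from $p_1$ to $p_k$ has cost $\sum_{i=1}^{k-1}(2L-i\delta)=\Theta(Lk)$. Tracing OAPT: $p_1$ is the first predicted terminal and contributes nothing; $p_k$ falls into Case~2 and commits to the entire MST path, paying $\Theta(Lk)$; each subsequent $q_j$ falls into Case~1, where the cheapest connecting edge costs $L+\epsilon$ for $j=1$ and $2\epsilon$ for $j\ge 2$, contributing $O(L)$ in total. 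Hence $\text{OAPT}=\Theta(Lk)$.

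To upper bound $\opt$, use the Steiner star rooted at the non-terminal $c$ with edges $\{cp_1,cp_k,cq_1,\ldots,cq_{k-2}\}$, of cost $2L+(k-2)\epsilon=\Theta(L)$ for small $\epsilon$. Therefore $\text{OAPT}/\opt=\Theta(k)=\Omega(\eta)$, establishing the lemma.

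The main subtlety is pinning down $\mst(\predT)$: without the perturbation, every spanning tree on $\predT$ is an MST and a favorable tie-break (e.g.\ a star at $p_1$) would shrink the Case~2 path to a single edge and kill the lower bound. The role of the $-i\delta$ terms is exactly to eliminate this freedom by making the Hamiltonian path the unique MST; verifying that the perturbed distances still form a metric is the only non-bookkeeping step, and is straightforward for $\delta$ sufficiently small. The rest of the argument is direct arithmetic on the two trees described above.
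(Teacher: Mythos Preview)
Your construction is correct and follows essentially the same strategy as the paper: force $\mst(\predT)$ to be a long path whose two endpoints are the only correctly predicted terminals, so that Case~2 buys the entire path while $\opt$ stays bounded. The paper's instance achieves MST uniqueness by giving the chord $v_1v_k$ a slightly higher cost than the $k-1$ unit edges on the cycle, whereas you achieve it via the $-i\delta$ perturbation on consecutive $p_i p_{i+1}$ distances; otherwise the two arguments are interchangeable.
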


\begin{figure}[htbp]
    \centering
    \includegraphics[width=0.6\linewidth]{ 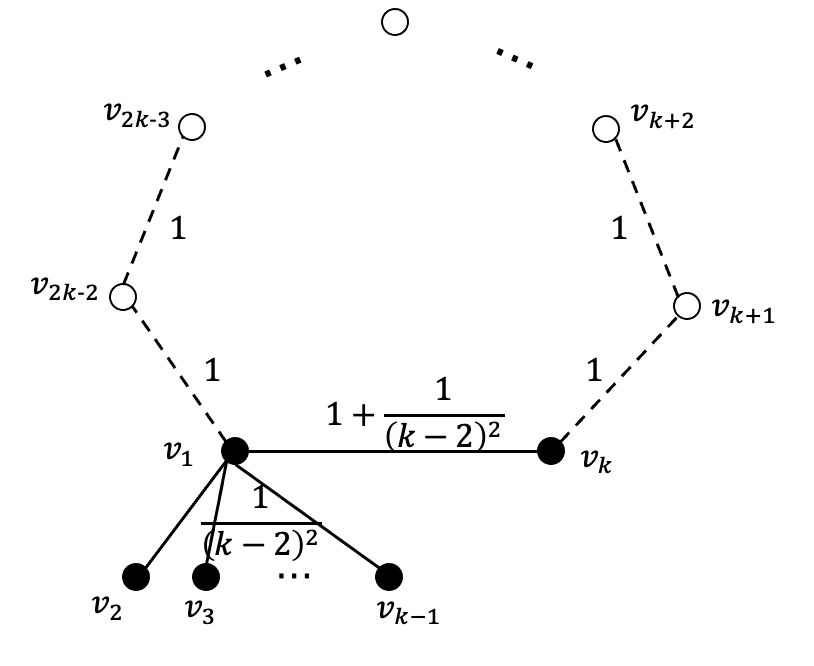}
    \caption{An online Steiner tree instance. The graph has $2k-2$ nodes and $2k-2$ edges. 
    The cost of edge $(v_1,v_k)$ is $1+1/(k-2)^2$ while each edge between $v_1$ and $v_i$ ($2\leq i\leq k-1$) has a cost $1/(k-2)^2$. The cost of each dash edge is 1. The terminal set $T$ consists of $v_1,v_2,...,v_k$ while the prediction is $\predT = \{v_1,v_k,v_{k+1},...,v_{2k-2}\}$.}
    \label{fig:oapt_lower_bound}
\end{figure}
To prove Lemma~\ref{lem:lower}, we first construct an instance and then show that algorithm OAPT is $\eta$-competitive on it.
The instance is shown in Fig.~\ref{fig:oapt_lower_bound}.
Due to space, the detailed proof is deferred to Appendix~\ref{sec:omit}.


Next we prove the upper bound of the algorithm's performance. The solution $A$ is partitioned into two sets $A_1$ and $A_2$.  We bound the cost of these sets separately.  The following lemma bounds the cost of $A_1$.  Essentially, these edges do not cost most than $O(\log \eta)$ because there are at most $\eta+1$ terminals that contribute to edges in $A_1$ and their cost is bounded by running the traditional online greedy algorithm on these terminals, which is logarithmically competitive.

\begin{lemma}\label{lem:greedy}
Let $c(A_1)$ be the total cost of edges in $A_1$. We have $c(A_1) \leq O(\log \eta) \opt$, where $\opt$ is the optimal objective value.
\end{lemma}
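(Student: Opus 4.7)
\textbf{Proof proposal for Lemma~\ref{lem:greedy}.}
The plan is to reduce $c(A_1)$ to the cost produced by the classical online greedy algorithm run on a small subsequence of terminals, and then invoke Theorem~\ref{thm:greedy}. Let $S \subseteq T$ denote the set of terminals that trigger Case~1. By definition of Case~1, $S$ consists of all terminals not in $\predT$, together with at most one additional terminal (the first terminal of $\predT \cap T$ to arrive). Since $|T \setminus \predT| = k - |T \cap \predT| = \eta$, we have $|S| \le \eta + 1$.

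Next, I would argue that $c(A_1)$ is at most the cost incurred by running the online greedy algorithm on the subsequence $S$ (with the original arrival order among elements of $S$). When a terminal $t_i \in S$ triggers Case~1, the algorithm adds the shortest edge from $t_i$ to some terminal in $T_{i-1}$. Because $S \cap T_{i-1} \subseteq T_{i-1}$, the length of this edge is at most the distance in $G$ from $t_i$ to the nearest terminal in $S \cap T_{i-1}$, which is exactly what the online greedy algorithm on $S$ would pay for $t_i$ (note $G$ is a complete metric graph, so shortest edge equals shortest-path distance). Summing over $t_i \in S$, we obtain
\[
c(A_1) \;\le\; c(\greedy(S)),
\]
where $\greedy(S)$ denotes the online greedy solution on input $S$ in the graph $G$.

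Applying Theorem~\ref{thm:greedy} to the instance with terminal set $S$ gives
\[
c(\greedy(S)) \;\le\; O(\log |S|) \cdot \opt(S) \;\le\; O(\log \eta)\cdot \opt(S),
\]
where $\opt(S)$ is the cost of an optimal Steiner tree on $S$ in $G$. Finally, since $S \subseteq T$, any feasible Steiner tree on $T$ is also feasible for $S$, so $\opt(S) \le \opt$. Combining these inequalities yields $c(A_1) \le O(\log \eta)\cdot \opt$.

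The only place one must be careful is the monotonicity step $c(A_1) \le c(\greedy(S))$: the algorithm in Case~1 benefits from being allowed to attach to Case~2 terminals as well, so the inequality goes in the favorable direction. Aside from this observation, the rest is a routine reduction to the greedy bound, and I do not foresee a serious obstacle.
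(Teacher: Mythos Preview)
Your proof is correct and follows essentially the same approach as the paper: you identify the Case~1 terminals as a set $S$ of size at most $\eta+1$, use monotonicity to bound $c(A_1)$ by the greedy cost on $S$, apply Theorem~\ref{thm:greedy} to get $O(\log|S|)\cdot\opt(S)$, and finish with $\opt(S)\le\opt$. The paper's proof is structurally identical (with $T'$ in place of your $S$), including the same observation that attaching to all of $T_{i-1}$ can only be cheaper than attaching to $S\cap T_{i-1}$.
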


\begin{proof}
Let $T'$ be the terminals which get connected to the tree via Case (1).
By definition of $\eta$, we have $|T'|\leq \eta +1$. This is because all terminals that are in $T \setminus \predT$ are in Case (1) and these contribute to $\eta$. The only other terminal that executes Case (1) is the first terminal in $\predT$ that arrives.

Consider a new problem instance $\cI'$ where the terminal set is $T'$.  In this new instance, the graph along with edge costs  remains the same. Let $\opt(\cI')$ be the optimal cost of this new instance and $\opt$ the optimal cost of the original problem.  Notice that $\opt(\cI') \leq \opt$ as $T'\subseteq T$ and $\opt$ represents a feasible solution for the new instance.   

We know that the cost of the traditional online greedy algorithm is at most  $O(\log(|T'|)) \opt(\cI')$ because it is $\log (|T'|)$ competitive on the instance $\opt(\cI')$ by Theorem~\ref{thm:greedy}. This holds no matter the arriving order of the terminals.  Let $c(\texttt{Greedy})$ be the cost of the greedy algorithm if the terminals in $T'$ arrive consistent with the arriving order for the original problem. We have that $c(\texttt{Greedy}) \leq O(\log(|T'|))\opt(\cI') \leq O(\log(\eta))\opt(\cI') \leq O(\log(\eta))\opt$.

Notice that the shortest edge from $t_i$ to $T_{i-1}$ is definitely at most the shortest edge from $t_i$ to $T_{i-1}\cap T'$.  Thus, $c(A_1) \leq c(\texttt{Greedy}) \leq O(\log(\eta))\opt$.
\end{proof}

Now we focus on the second set $A_2$.  These terminals potentially cause the bulk of the cost for the algorithm.  We will bound $c(A_2)$ by $O(\eta)\opt$, which proves Theorem~\ref{thm:oapt}.  
We first show that for each edge $e\in A_2$ has cost at most $\opt$. Notice that this is not enough to prove $c(A_2) \leq O(\eta)\opt$  because the number of edges in $A_2$ could be as large as $k-1$. For the remainder of this section, let $P_i$ denote the path added into $A_2$ in iteration $i$. 

\begin{lemma}\label{lem:edgecost}
Each edge in $A_2$ has cost at most $\opt$.
\end{lemma}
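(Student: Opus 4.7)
The plan is to exploit two facts: (i) each edge $e\in A_2$ lies on some path $P_i\subseteq \mst(\predT)$ whose two endpoints are both correctly predicted terminals, i.e.\ they lie in $T\cap\predT$; and (ii) the graph is a complete metric, so the straight edge between any two nodes already realizes their shortest-path distance. Together these let me bound $c_e$ by the distance between two real terminals, which is in turn bounded by $\opt$.

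First I would fix an arbitrary edge $e=(u,v)\in A_2$. By construction it belongs to $\mst(\predT)$, and it lies on some path $P_i$ added in Case~2. Removing $e$ splits $\mst(\predT)$ into two components, and since $P_i$ crosses $e$, its endpoints $t_i$ and some earlier $t'\in \predT\cap T_{i-1}$ lie on opposite sides. Crucially, because Case~2 is only triggered when $t_i\in \predT$, both $t_i$ and $t'$ belong to $T\cap \predT$; so I have located two \emph{real} terminals that are separated by the deletion of $e$ from $\mst(\predT)$.

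Next I would compare $c_e$ with the direct edge $(t_i,t')$ in the complete metric graph $G$. If $(t_i,t')\in \mst(\predT)$, then the unique $\mst(\predT)$-path between $t_i$ and $t'$ is that single edge, forcing $e=(t_i,t')$ and $c_e=c_{(t_i,t')}$. Otherwise I invoke the MST cycle property from Lemma~\ref{lem:cycle}: adding $(t_i,t')$ to $\mst(\predT)$ creates a unique cycle containing $e$, so $c_e \le c_{(t_i,t')}$. Either way, $c_e \le c_{(t_i,t')}$.

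Finally, since $G$ is a complete metric graph, $c_{(t_i,t')}$ equals the shortest-path distance $d_G(t_i,t')$. Now $t_i,t'\in T$, so the optimal Steiner tree on $T$ contains a connected subpath between them, and its cost is a lower bound for $\opt$; hence $d_G(t_i,t') \le \opt$. Chaining the inequalities gives $c_e\le \opt$. The only subtle point to get right is the first step, namely verifying that both endpoints of $P_i$ really lie in $T\cap \predT$ (so that the argument via $\opt$ is legitimate); once that is established, the rest is a direct application of the MST cycle property and the triangle inequality, which I expect to be routine.
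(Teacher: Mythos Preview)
Your proof is correct and follows essentially the same route as the paper's: both arguments identify the endpoints $t_i$ and $t'$ of $P_i$ as real terminals in $T\cap\predT$, apply the MST cycle property (Lemma~\ref{lem:cycle}) to bound every edge of $P_i$ by $c(t_i,t')$, and then use the metric assumption together with $t_i,t'\in T$ to conclude $c(t_i,t')\le\opt$. The only cosmetic difference is that you start from a fixed edge $e\in A_2$ and argue backwards to the iteration that inserted it, whereas the paper fixes the iteration first; the substance is identical.
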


\begin{proof}
Consider when terminal $t_i$ arrives, the algorithm executes Case (2) and the path $P_i \neq \emptyset$. Notice that if Case (2) is executed then there is a terminal in $t_j \in \predT \cap T_{i-1}$ that has arrived before $t_i$. Moreover, for any terminal $t_j \in \predT \cap T_{i-1}$, the cost of edge $(t_i,t_j)$ is at most $\opt$ because these two nodes are connected in the optimal solution and $c(t_i,t_j)$ is the minimum cost to connect them. 
To show the lemma, we show that for any edge $e\in P_i$, $c_e \leq c(t_i,t_j)$.  This then bounds the cost of any edge in $A_2$ by $\opt$.

Fix the terminal $t_j \in \predT \cap T_{i-1}$ that $t_i$ connects to using path $P_i$.   If the edge  $(t_i,t_j)$ is in $\mst(\predT)$ then this will be the unique edge in $P_i$. If $(t_i,t_j)$ is not in $\mst(\predT)$ then by Lemma~\ref{lem:cycle} every edge on the cycle $P_i \cup \{(t_i,t_j)\}$ has cost at most $c(t_i,t_j) \leq \opt$.
\end{proof}



We are ready to bound the cost of the edges in $A_2$. 
\begin{lemma}\label{lem:mst}
The edges of $A_2$ can be partitioned into two sets $B_1$ and $B_2$, where $c(B_1) \leq \opt$ and $|B_2| \leq \eta.$ Moreover, the total cost of edges in $A_2$ is at most $O(\eta)\opt$.
\end{lemma}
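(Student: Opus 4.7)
I would start by identifying the structure of $A_2$. Since Case~(2) of OAPT adds, for each newly arriving predicted terminal, the unique tree path in $\mst(\predT)$ from that terminal to the already-built subtree, $A_2$ is exactly the minimal subtree of $\mst(\predT)$ that spans $T_0 := T\cap\predT$. Consequently every leaf of $A_2$ lies in $T_0$, so every ``bad'' node $w \in W := V(A_2)\cap(\predT\setminus T)$ has degree at least $2$ in $A_2$. Because $|T|=|\predT|=k$, we have $|\predT\setminus T|=\eta$, and therefore $|W|\le\eta$.

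I would then construct the partition explicitly: root $A_2$ at an arbitrary node of $T_0$ and, for each $w\in W$, place the edge from $w$ to its parent into $B_2$; set $B_1 := A_2\setminus B_2$. This immediately gives $|B_2|\le|W|\le\eta$. Combined with Lemma~\ref{lem:edgecost}, each edge of $B_2$ has cost at most $\opt$, so $c(B_2)\le\eta\cdot\opt$.

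For the bound $c(B_1)\le\opt$, the idea is that, after removing one incident edge per bad node, the residual bad internal nodes in $B_1$ can be ``bypassed'' using the metric triangle inequality to produce a spanning structure on only the good terminals in $T_0$. Since $T_0\subseteq T$, one can compare this shortcut structure against the subtree of $\opt$ spanning $T_0$, which has cost at most $\opt$. The quantitative link is provided by the MST cycle property (Lemma~\ref{lem:cycle}): each edge $e\in B_1\subseteq\mst(\predT)$ sits on a tree path between two terminals of $T_0$ whose direct distance is at most $\opt$, so a charging argument along a DFS traversal of $\opt|_{T_0}$ assigns each edge of $B_1$ to a disjoint segment of this traversal. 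Summing these charges yields $c(B_1)\le\opt$. Combining the two bounds gives $c(A_2)=c(B_1)+c(B_2)\le\opt+\eta\cdot\opt=O(\eta)\cdot\opt$, which is the ``moreover'' clause.

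The main obstacle is making the $c(B_1)\le\opt$ step rigorous. Shortcutting via the triangle inequality naturally bounds the shortcut cost by $c(B_1)$, which is the wrong direction; so the argument must instead produce an injection from edges of $B_1$ to disjoint pieces of a reference Steiner structure on $T_0$ of cost at most $\opt$. The care needed is (i) in choosing the edge per bad node that goes into $B_2$ so that $B_1$ decomposes into ``good-anchored'' paths, and (ii) in verifying, via the MST cycle property together with Lemma~\ref{lem:edgecost}, that no piece of the reference traversal receives charge from two distinct $B_1$-edges, which would introduce an unwanted constant factor.
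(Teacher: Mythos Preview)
Your partition is different from the paper's, and the step you yourself flag as the ``main obstacle''---bounding $c(B_1)$---is a genuine gap. The DFS/charging sketch does not produce the injection you need: as you note, the triangle inequality goes the wrong way, and nothing in the sketch guarantees that distinct $B_1$-edges charge to disjoint segments of the reference traversal.

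The paper sidesteps this by constructing the partition in the opposite order. Rather than fixing $B_2$ structurally (one parent edge per bad node) and then fighting to bound the remainder, it first builds a cheap set $E'\subseteq\mst(\predT)$ of size $|T_0|-1$ via an edge-exchange with $\mst(T_0)$: initialize $E'=\mst(\predT)\cap\mst(T_0)$ and $S=\mst(\predT)$; for each $e\in\mst(T_0)\setminus\mst(\predT)$ insert $e$ into $S$, remove a cycle edge $e'\in\mst(\predT)\setminus\mst(T_0)$, and add $e'$ to $E'$. The cycle property of $\mst(\predT)$ gives $c(e')\le c(e)$ at every swap, so $c(E')\le c(\mst(T_0))\le O(\opt)$, while counting gives $|\mst(\predT)\setminus E'|=\eta$. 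Setting $B_1:=E'\cap A_2$ and $B_2:=A_2\setminus E'$ then yields both bounds immediately. Your partition can in fact be rescued by the same idea---match each $(t,p(t))\in B_1$ to an $\mst(T_0)$-edge crossing the cut that removing $(t,p(t))$ from $\mst(\predT)$ induces on $T_0$, via a bijective basis-exchange between the two spanning trees of $T_0$---but that is essentially the paper's exchange argument applied after the partition rather than used to define it.
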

\begin{proof}

We begin by partitioning the edges of $A_2$ into two sets $B_1$ and $B_2$.  Let $E'$ contain the edges in $\mst(\predT) \cap \mst(\predT \cap T)$.  
Initialize $S = \mst(\predT)$.  The set $S$ will always be a spanning tree of $\predT$. We do the following iteratively. For each edge $e \in   \mst(\predT \cap T) \setminus \mst(\predT)$, we add it to $S$ and remove an arbitrary edge $e'  \in \mst(\predT) \setminus \mst(\predT \cap T)$ from $S$ that forms a cycle. The removed edge $e'$ is added to $E'$.  
Set $B_1 = E' \cap A_2$ and $B_2 = A_2 \setminus E'$.




    






Intuitively, the above procedure  obtains a spanning tree $S$ of $\predT$ by replacing some edges in $\mst(\predT)$ that are not in $\mst(\predT) \cap \mst(\predT \cap T)$ with the edges in $\mst(\predT\cap T)$.  We have that $c(E') \leq c(\mst(\predT \cap T))$.  This is because $c(e) \geq c(e')$ in each step of the algorithm by definition of $\mst(\predT)$ and Lemma~\ref{lem:cycle}.  Knowing $c(\mst(\predT \cap T)) \leq  \opt$, we see that $c(B_1) \leq c(E') \leq \opt$.


According to the algorithm, the number of edges in $E'$ is exactly the same as the number of edges in $\mst(\predT \cap T)$. In other words, $|E'| = k-\eta -1$ and $|\mst(\predT) \setminus E'| = \eta$. Since $A_2$ is a subset of $\mst(\predT)$, $|B_2| \leq |\mst(\predT) \setminus E'| = \eta$. Namely, the number of edges in the second partition is at most $\eta$. Using Lemma~\ref{lem:edgecost}, we have $c(B_2) \leq \eta \opt$, completing the proof of this lemma.

\end{proof}

\begin{proof}[\textbf{Proof of Theorem~\ref{thm:oapt}}]
The theorem can be proved directly by Lemma~\ref{lem:greedy} and Lemma~\ref{lem:mst}: 
$c(A) \leq c(A_1) + c(A_2) \leq O(\log \eta)\opt + O(\eta)\opt = O(\eta)\opt.$ The lower bound in the theorem is given in Lemma~\ref{lem:lower}. Altogether, we have the main theorem. 
\end{proof}

\subsection{An Improved Online Algorithm Leveraging Predictions}

In this section, we will build on the simple algorithm to give a more robust online algorithm that has a competitive ratio of $O(\log \eta)$.  Notice that in the prior proof, the large cost arises due to the edges that are added in Case (2),  especially the edges in $B_2 = A_2\setminus E'$ in proof of the final lemma.   The new algorithm is designed to mitigate this cost.

\smallskip
\noindent \textbf{Improved Online Algorithm with Predicted Terminals (IOAPT):} Let $\predT$ be the predicted set of terminals and $\mst(\predT)$ be the minimum spanning tree on $\predT$.  Let $T_{i}$ be the first set of $i$ terminals that arrive online.  $T_k$ contains all online terminals. 

Initialize $A = \emptyset$ to be the subgraph that the algorithm will construct connecting the online terminals. The algorithm returns the set of edges in $A$ after all terminals arrive.  We divide the edges of $A= A_1 \cup A_2$ into two sets, $A_1$ and $A_2$ depending on the case that causes us to add edges to $A$.    Consider when terminal $t_i$ arrives.

\begin{itemize}
\item \textbf{Case 1:}  If $t_i \notin \predT$ or $t_i$ is the first terminal in $\predT$ to arrive, add to $A_1$ the shortest edge in $G$ connecting $t_i$ to terminals $T_{i-1}$ that have arrived. No edge is bought if this is the first terminal that arrives. 
\item \textbf{Case 2:} Otherwise, find the shortest path $P_i$ connecting $t_i$ to $\predT \cap T_{i-1}$ in $\mst(\predT)$.  
Use $e_i$ to denote the shortest edge connecting $t_i$ to $\predT \cap T_{i-1}$ in $G$.
We add to $A_2$ a sub-path $P_i' \subseteq P_i$ such that its endpoints contain $t_i$ while its total cost is in $[c_{e_i},2c_{e_i}]$. 
Next, add $e_i$ to $A_2$ if $t_i$ is not connected to the tree after adding $P'_i$. 
\end{itemize}



Notice that in Case 2, we can always find such a sub-path $P_i'$ due to the property of the minimum spanning tree and the assumption that $G$ is a metric. Thus, the algorithm always computes a feasible solution. We have the following two lemmas.  The proofs are identical to Lemma~\ref{lem:edgecost} and Lemma~\ref{lem:mst} respectively.

\begin{lemma}\label{lem:iedgecost}
The cost of any edge in $A_2$ computed by IOAPT is at most $\opt$. 
\end{lemma}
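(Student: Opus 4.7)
The plan is to mimic the argument of Lemma~\ref{lem:edgecost} almost verbatim, but now I must separately handle the two types of edges that IOAPT can place into $A_2$ during Case~2: edges belonging to the sub-path $P_i' \subseteq P_i \subseteq \mst(\predT)$, and possibly the single ``direct'' edge $e_i$ from $G$.

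First I would bound $c_{e_i}$. Whenever Case~2 executes, there is some predicted terminal $t_j \in \predT \cap T_{i-1}$ that has already arrived, and $e_i$ is defined as the cheapest $G$-edge from $t_i$ to $\predT \cap T_{i-1}$. In particular $c_{e_i} \le c(t_i,t_j)$, and since both $t_i$ and $t_j$ lie in the real terminal set $T$, they must be connected in the optimal Steiner tree, so $c(t_i,t_j)\le \opt$. Thus $c_{e_i}\le \opt$, handling the ``direct'' edge.

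Next I would handle the edges of $P_i'$. Since $P_i' \subseteq P_i$, it suffices to show every edge on $P_i$ costs at most $\opt$. Let $t_j \in \predT\cap T_{i-1}$ denote the endpoint of $P_i$ other than $t_i$. As in Lemma~\ref{lem:edgecost}, $t_j \in T$ so $c(t_i,t_j)\le \opt$. If $(t_i,t_j)\in \mst(\predT)$ then $P_i$ is this single edge and we are done; otherwise Lemma~\ref{lem:cycle} applied to the unique cycle $P_i\cup\{(t_i,t_j)\}$ in $\mst(\predT)\cup\{(t_i,t_j)\}$ forces every edge on $P_i$ to cost at most $c(t_i,t_j)\le \opt$.

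Combining the two cases, every edge the algorithm places in $A_2$ is bounded by $\opt$. There is really no obstacle here: the only subtlety to be careful about is that $P_i$ and $e_i$ may head to different predicted terminals in $\predT\cap T_{i-1}$, so the cycle argument must be instantiated using the endpoint of $P_i$ rather than the endpoint of $e_i$. Once that distinction is made, the argument is identical to Lemma~\ref{lem:edgecost}.
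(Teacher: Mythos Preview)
Your proposal is correct and matches the paper's approach: the paper simply states that the proof is identical to Lemma~\ref{lem:edgecost}, and you carry out exactly that argument, with the appropriate extra care of separately treating the direct edge $e_i$ and the sub-path edges $P_i'\subseteq P_i$. The subtlety you flag---that $P_i$ and $e_i$ may terminate at different members of $\predT\cap T_{i-1}$, so the cycle argument must use the endpoint of $P_i$---is exactly the right adaptation, and otherwise the reasoning is the intended one.
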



\begin{lemma}\label{lem:partition}
The edges of $A_2$ can be partitioned into two sets $B_1$ and $B_2$ where $c(B_1) \leq \opt$ and $|B_2| \leq \eta.$
\end{lemma}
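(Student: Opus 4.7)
The plan is to run the proof of Lemma~\ref{lem:mst} essentially line-for-line, reusing the same edge-swap construction of $E'$ and the same high-level partition of $A_2$; the only additional bookkeeping needed is for the backup edges $e_i$ that IOAPT may insert in Case 2 and which can lie outside $\mst(\predT)$.

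First I would reconstruct a set $E' \subseteq \mst(\predT)$ exactly as in Lemma~\ref{lem:mst}: initialize $E' = \mst(\predT) \cap \mst(\predT \cap T)$ and $S = \mst(\predT)$, and then for each edge $e \in \mst(\predT \cap T) \setminus \mst(\predT)$ insert $e$ into $S$ and remove some edge $e' \in \mst(\predT) \setminus \mst(\predT \cap T)$ on the unique newly-formed cycle, pushing $e'$ into $E'$. By Lemma~\ref{lem:cycle} and the minimality of $\mst(\predT)$, each swap satisfies $c(e') \leq c(e)$, so $c(E') \leq c(\mst(\predT \cap T)) \leq \opt$; by construction $|E'| = k - \eta - 1$ and hence $|\mst(\predT) \setminus E'| = \eta$. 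I would then define $B_1 = A_2 \cap E'$ and $B_2 = A_2 \setminus E'$; the cost bound $c(B_1) \leq c(E') \leq \opt$ is immediate.

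For the count bound $|B_2| \leq \eta$, the easy portion is the sub-path edges: every $P_i'$ is a sub-path of $P_i \subseteq \mst(\predT)$, so the sub-path edges contributing to $B_2$ lie in $\mst(\predT) \setminus E'$ and number at most $\eta$ in total (each MST edge appears in $A$ at most once). The main obstacle is the fallback edges $e_i$: they need not belong to $\mst(\predT)$ and, at first glance, could push $|B_2|$ past $\eta$. The plan here is to exploit Lemma~\ref{lem:cycle} on the cycle formed by $P_i \cup \{e_i\}$ inside $\mst(\predT) \cup \{e_i\}$: this cycle must contain at least one edge of $\mst(\predT) \setminus E'$, which can serve as an injective ``slot'' charged by $e_i$ — indeed one could just as well have run the swap procedure so that this cycle edge were removed in favor of $e_i$, since $c(e_i)$ dominates $c$ on that cycle and the $c(E') \leq \opt$ bound is preserved.

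The crux of the write-up is therefore verifying that this charging is injective across iterations, so that the total contribution of all $e_i$'s to $B_2$ shares the same $\eta$ budget as the sub-path edges and does not double-count. Once that is established, $|B_2| \leq |\mst(\predT) \setminus E'| = \eta$, and together with $c(B_1) \leq \opt$ this gives the partition claimed in the lemma.
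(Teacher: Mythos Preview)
Your construction of $E'$ and the split $B_1 = A_2 \cap E'$, $B_2 = A_2 \setminus E'$ is precisely the paper's approach; the paper declares the proof ``identical to Lemma~\ref{lem:mst}'' and adds nothing further. You are right to flag the fallback edges $e_i$ as a complication absent from OAPT, since they need not lie in $\mst(\predT)$ and so the step ``$A_2 \subseteq \mst(\predT)$'' from the proof of Lemma~\ref{lem:mst} no longer applies verbatim. The paper, however, does not resolve this at the lemma level: if you look at how Lemma~\ref{lem:partition} is actually invoked in the proof of Theorem~\ref{thm:ioapt}, what is used is a partition of $\mst(\predT)$ into $E_0$ and $E_1$ with $|E_1|=\eta$, and the cost of each $e_i$ is absorbed separately there through the bound $\newdelta_i c(A_2)\le 3c_{e_i}$. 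Read that way, only your argument for the sub-path edges is needed, and that part matches the paper exactly.

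Your additional charging scheme for the $e_i$'s has a real gap. First, $P_i$ and $e_i$ may terminate at \emph{different} members of $\predT\cap T_{i-1}$, so $P_i\cup\{e_i\}$ is not in general a cycle; you would need the fundamental cycle of $e_i$ in $\mst(\predT)$ instead, and Lemma~\ref{lem:cycle} alone does not tell you that this cycle meets $\mst(\predT)\setminus E'$. Second, and more seriously, even when such a slot exists it can coincide with a sub-path edge already sitting in $B_2$, so the same element of $\mst(\predT)\setminus E'$ is charged twice --- once as a $P_j'$-edge and once as the slot for some $e_i$. Concretely: take $\predT=\{a,b,c,d,e\}$ with $\mst(\predT)$ the unit-cost path $a\text{--}b\text{--}c\text{--}d\text{--}e$, $\predT\cap T=\{a,c,e\}$ (so $\eta=2$), direct edges $ac$ and $ce$ of cost $1$, arrival order $a,c,e$; with the legitimate swap choice $E'=\{ab,cd\}$ one obtains $A_2=\{bc,\,ac,\,de,\,ce\}$ and $|B_2|=4>\eta$. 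So the arbitrary-swap construction you inherit from Lemma~\ref{lem:mst} does not suffice here; at best the argument yields $|B_2|\le 2\eta$, which is harmless for Theorem~\ref{thm:ioapt} but does not establish the lemma as literally stated.
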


With these lemmas, we can prove the theorem.

\begin{theorem}\label{thm:ioapt}
The competitive ratio of  IOAPT is $O(\log \eta)$. 
\end{theorem}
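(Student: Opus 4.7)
I would prove Theorem~\ref{thm:ioapt} by decomposing $c(A) = c(A_1) + c(A_2)$ and bounding each piece by $O(\log \eta)\opt$. The bound $c(A_1) \leq O(\log \eta)\opt$ is inherited verbatim from Lemma~\ref{lem:greedy}: Case 1 in IOAPT is identical to Case 1 in OAPT, at most $\eta + 1$ terminals trigger it, and the cheapest-edge connections that $A_1$ records are dominated by the cost of an online greedy run on those terminals, which is $O(\log \eta)\opt$ by Theorem~\ref{thm:greedy}.

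For $c(A_2)$ I would invoke Lemma~\ref{lem:partition} to write $A_2 = B_1 \cup B_2$ with $c(B_1) \leq \opt$ and $|B_2| \leq \eta$, reducing the task to bounding $c(B_2)$. Note that the naive estimate ``each edge costs at most $\opt$'' from Lemma~\ref{lem:iedgecost} only yields $c(B_2) \leq \eta\opt$, which recovers the weaker OAPT guarantee rather than $O(\log\eta)\opt$; a sharper argument that exploits the IOAPT-specific step is needed.

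The key new idea is a charging argument. In IOAPT, Case 2 iteration $i$ contributes to $A_2$ only the short sub-path $P_i'$ (of cost at most $2c_{e_i}$) together with at most the single edge $e_i$, so its total contribution is at most $3c_{e_i}$. Let $I$ denote the set of \emph{bad} Case 2 iterations, namely those that first introduce at least one edge of $B_2$ into $A_2$; since every $B_2$ edge is attributed to exactly one iteration, $|I| \leq |B_2| \leq \eta$, and consequently $c(B_2) \leq 3\sum_{i \in I} c_{e_i}$. To bound this sum, set $T_I := \{t_i : i \in I\}$ and compare against a greedy run on $T_I$ in its induced arrival order: because $T_I \subseteq \predT$, we have $T_I \cap T_{i-1} \subseteq \predT \cap T_{i-1}$, and therefore $c_{e_i} = d(t_i, \predT \cap T_{i-1}) \leq d(t_i, T_I \cap T_{i-1})$ for every non-first bad terminal. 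Summing over such terminals yields the online greedy cost on $T_I$, which Theorem~\ref{thm:greedy} bounds by $O(\log|T_I|)\opt(T_I) \leq O(\log\eta)\opt$; the single first bad iteration contributes at most $\opt$ by Lemma~\ref{lem:iedgecost}. Combining everything gives $c(A) \leq O(\log\eta)\opt$.

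The main obstacle is the charging step: one must carefully attribute each edge of $B_2$ to a unique iteration (so that $|I| \leq |B_2|$) and then argue that restricting the shortest-edge target set from $\predT \cap T_{i-1}$ down to the possibly much smaller $T_I \cap T_{i-1}$ only weakens the minimum, so that a greedy bound on the at-most-$\eta$ bad terminals (rather than on all of $T \cap \predT$, which would give the inferior $O(\log(k-\eta))$) suffices to close the proof.
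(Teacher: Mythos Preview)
Your proposal is correct and tracks the paper's strategy closely: bound $c(A_1)$ via Lemma~\ref{lem:greedy}, then for $c(A_2)$ invoke the partition lemma and control the expensive part by a greedy argument on at most $\eta$ ``bad'' terminals, using the per-iteration bound $\Delta_i c(A_2)\le 3c_{e_i}$ and comparing $c_{e_i}$ to the connection cost of online greedy restricted to the bad terminals.

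There is one organisational difference worth noting. The paper places the good/bad split on the side of $\mst(\predT)$: it partitions $\mst(\predT)=E_0\cup E_1$ with $c(E_0)\le 2\opt$ and $|E_1|=\eta$, calls iteration $i$ bad when $P_i'\not\subseteq E_0$, bounds the good iterations via $\sum_{i\in T^G}2c(P_i')\le 2c(E_0)$, and asserts $|T^B|\le\eta$ because each bad $P_i'$ meets $E_1$. You instead take the partition $A_2=B_1\cup B_2$ of Lemma~\ref{lem:partition} literally, observe $c(B_1)\le\opt$ is already done, and define an iteration to be bad when it \emph{first introduces} a $B_2$-edge. Your first-introduction bookkeeping makes $|I|\le|B_2|\le\eta$ immediate without needing the sub-paths $P_i'$ to be pairwise edge-disjoint, something the paper's counting tacitly relies on and which is not automatic (two different $P_i'$ can share an edge of $\mst(\predT)$). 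In that sense your write-up is a slightly more careful realization of the same argument; the separate handling of the first bad terminal via Lemma~\ref{lem:iedgecost} is also a nice touch that the paper glosses over.
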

\begin{proof}
The analysis of $c(A_1)$ is the same as that in OAPT.  The proof of Lemma~\ref{lem:greedy} immediately implies $c(A_1) \leq O(\log \eta)\opt$. Next we focus on bounding the cost of $A_2$.

Let $\newdelta_{i} c(A_2)$ be the increase in $c(A_2)$ in when terminal $t_i$ arrives. According to definition of the algorithm, we know  $\newdelta_{i}c(A_2) \leq 2c(P_i')$ and $\newdelta_{i}c(A_2) \leq 3c_{e_i}$.  Lemma~\ref{lem:partition} states the edges in $\mst(\predT)$ can be partitioned into two sets $E_0$ and $E_1 := \mst(\predT) \setminus E_0$, where the cost of $E_0$ is at most $2\opt$ and the number of edges in $E_1$ is $\eta$. 

Let $T^G$ be the `good' terminals that execute Case (2) and $P_i' \subseteq E_0$.  Let $T^B$ be the remaining `bad' terminals. We see the following for the good terminals, 
$c(A_2) = \sum\limits_{i\in T^G} \newdelta_i c(A_2) \leq \sum\limits_{i\in T^G}  2c(P_i') \leq 2c(E_0) \leq O( \opt).$ In other words, if the sub-path added in each iteration always belongs to $E_0$, the total cost of $c(A_2)$ is bounded by a constant factor of $\opt$. Say an iteration is good if the sub-path added in it belongs to $E_0$. The total increment of all good iterations is at most $O(\opt)$.

We use the second upper bound to analyze the cost of the bad terminals. This follows similarly to the proof of Lemma~\ref{lem:greedy}. Indeed, we know the following,  $\sum\limits_{i\in T^B} \newdelta_i c(A_2) \leq \sum\limits_{i\in T^B} 3c_{e_i}.$
If iteration $i$ is bad, there exists at least one edge in sub-path $P_i'$ belonging to $E_1$. Since $|E_1| = \eta$, the number of bad iterations is at most $\eta$.  The total cost of these iterations $\sum\limits_{i\in T^B} 3c_{e_i}$ is at most $3$ multiplied by the cost of running the greedy algorithm on the terminals in $T^B$. Let $\opt(T^B)$ be the optimal solution on $T^B$.  We know that $\opt \geq \opt(T^B)$.  Moreover, we know that the greedy algorithm has cost at most $O(\log(|T^B|)) \opt(T^B) \leq O(\log(|T^B|)) \opt \leq O(\log \eta)\opt$.  Thus we have the following,  
$\sum\limits_{i\in T^B} \newdelta_i c(A_2) \leq O(\log \eta) \opt$. This completes the proof of Theorem~\ref{thm:ioapt}.
\end{proof}

The competitive ratio $O(\log (\eta))$  approaches the worst-case bound $\log (k)$ when $\eta = k$. Here we give a stronger statement to show our algorithm optimally uses the predictions. The proof is deferred to Appendix~\ref{sec:omit}.

\begin{theorem}\label{thm:undirected_lower_bound}
For online undirected Steiner tree with predicted terminals, given any $\eta \geq 1$, no online algorithm has a competitive ratio better than $\Omega(\log(\eta))$.
\end{theorem}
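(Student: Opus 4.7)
The plan is to reduce to the classical Imase--Waxman lower bound~\cite{DBLP:journals/siamdm/ImaseW91}, which says that every deterministic online Steiner tree algorithm has competitive ratio $\Omega(\log m)$ on worst-case metrics with $m$ terminals, even when the underlying graph is known in advance. The idea is to build a learning-augmented instance in which the prediction is designed to be completely orthogonal to the online decisions, so that the algorithm is effectively reduced to solving a plain online Steiner tree problem on $\eta$ unknown terminals, to which the Imase--Waxman bound applies directly.

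Fix any $\eta \geq 2$ and any online algorithm $\cA$ that consumes a prediction $\predT$. First I would construct a metric graph $G$ on $2\eta$ nodes partitioned into a ``hard'' set $U$ of size $\eta$ and a ``dummy'' set $D$ of size $\eta$. On $U$ I impose the metric used in the Imase--Waxman lower bound construction on $\eta$ terminals. The set $D$ is placed so as to be \emph{irrelevant}: for instance, every vertex in $D$ is at distance strictly larger than the diameter of $U$ from every other vertex, so that any edge incident to $D$ is strictly wasteful for connecting terminals inside $U$. I would then set $\predT := D$ and reveal it to $\cA$ up front, so that the learning-augmented instance has $k = \eta$.

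Once $\predT$ is frozen, $\cA(\predT, \cdot)$ is just a deterministic online Steiner tree algorithm on $G$. I would then invoke the Imase--Waxman adaptive adversary, restricted to revealing terminals only from $U$, producing a sequence of $\eta$ terminals $T \subseteq U$ that forces $\cA$ to pay at least $c \log \eta \cdot \opt(T)$ for some constant $c > 0$. The prediction-error bookkeeping is immediate: $T \cap \predT = \emptyset$, so $\eta_{\text{error}} = k - |T \cap \predT| = \eta$. Moreover, because $D$ is far from $U$, the optimal Steiner tree for $T$ is contained entirely in $U$, so $\opt = \opt(T)$, and the competitive ratio on this instance is $\Omega(\log \eta)$ as claimed.

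The main obstacle I anticipate is arguing cleanly that revealing $\predT = D$ gives $\cA$ no usable information about the future terminals in $U$, so that the Imase--Waxman construction (which is stated for plain online algorithms) still applies. This reduces to the observation that any edge bought between $U$ and $D$, or inside $D$, can only strictly increase $\cA$'s cost without reducing any connection cost within $U$, by the distance gap; hence the ``$D$-side'' behavior of $\cA$ can be harmlessly ignored when lower bounding its cost. With this in hand, the Imase--Waxman lower bound applies verbatim to $\cA(\predT, \cdot)$ restricted to the metric on $U$, and the theorem follows.
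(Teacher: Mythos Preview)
Your proposal is correct and follows essentially the same reduction as the paper: embed the Imase--Waxman hard instance on $\eta$ terminals and arrange the prediction to be useless, so that the classical $\Omega(\log \eta)$ lower bound applies verbatim to the prediction-augmented algorithm. The only cosmetic difference is that the paper pads with $k-\eta$ correctly predicted zero-cost terminals (so the construction works for arbitrary $k\ge\eta$), whereas you take $k=\eta$ and let $\predT$ point entirely at far-away dummy nodes; both choices render the prediction irrelevant to the cost and yield the same bound.
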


\noindent \textbf{Improving the Performance of the Algorithm in Practice.} We describe a practical modification of the algorithm. This modification ensures that the algorithm maintains its theoretical bound, while improving the performance. The observation is that the algorithm may purchase edges not needed for feasibility. Some edges added by our algorithm  are purchased based on  predicted terminals and  they will become useless if these predicted terminals do not arrive. We can choose not to buy these edges immediately.  When $t_i$ arrives, the edges in $P'_i$ are not bought immediately if we buy $e_i$.   Instead, the algorithm buys the edges the first time a terminal uses them to connect to previous terminals.

    


\section{Online Steiner Tree in Directed Graphs}\label{sec:directed_case}

This section considers online Steiner tree when the graph is directed. The input is  a directed graph $G=(V,E)$, where each edge $e$ has cost $c_e\geq 0$, a root vertex $r\in V$ and a terminal set $T\subseteq V$ that arrives online.   This paper assumes without loss of generality, that $c_e > 1$ for any edge $e$. Additionally the input graph is assumed to ensure that there exists a directed path from root $r$ to every vertex in $V$.

The terminals in $T$ arrive online.  When a terminal $v \in T$ arrives the algorithm must buy some edges to ensure there is a directed path from the root $r$ to $v$ in the subgraph induced by the bought edges. The goal is to minimize the total cost of the bought edges.   

In directed graphs, the worst-case bound on the competitive ratio is $\Omega(k)$  \cite{DBLP:journals/ipl/WestbrookY95}. Our main result shows that  we can break through this bound.

\begin{theorem}\label{thm:directed_main}
Given a predicted terminal set $\predT$, there exists an algorithm with competitive ratio at most $O(\log k + \eta)$, where $\eta:=k-|T\cap \predT|$.
\end{theorem}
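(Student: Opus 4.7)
The plan is to mirror the undirected warm-up algorithm OAPT, adapted to the directed setting. Offline, precompute a directed arborescence $\hat{S}$ rooted at $r$ spanning $\predT\cup\{r\}$ using any offline approximation for directed Steiner tree. Online, when terminal $t_i$ arrives, if $t_i\in\predT$ buy the unbought portion of the unique $r\to t_i$ path in $\hat{S}$ (call these edges $A_1$); otherwise buy the shortest directed $r\to t_i$ path in $G$ (call these edges $A_2$). The algorithm returns $A=A_1\cup A_2$.

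Bounding $c(A_2)$ is immediate: any feasible arborescence for $T$ contains a directed $r\to t_i$ path of cost at most $\opt$, so each of the at most $\eta$ unpredicted terminals contributes at most $\opt$, giving $c(A_2)\leq\eta\cdot\opt$. This produces the additive $\eta$ term in the competitive ratio.

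For $c(A_1)$, the purchased edges form a sub-arborescence of $\hat{S}$ spanning $\{r\}\cup(\predT\cap T)$, so $c(A_1)\leq c(\hat{S})$. I would first establish a directed analogue of Lemma~\ref{lem:edgecost}, namely that every edge on the path used to connect a predicted terminal $t_i$ has cost at most $\opt$, by charging it against the $r\to t_j$ path in the optimal solution for a predecessor terminal $t_j$ that $t_i$ is connecting to. Then, adapting the decomposition in Lemma~\ref{lem:mst}, I would partition the relevant edges of $\hat{S}$ into a ``backbone'' $B_1$ of cost $O(\log k)\cdot\opt$, obtained by comparing $\hat{S}$ against an arborescence spanning only the correctly predicted set $\predT\cap T$ (whose optimum is at most $\opt$ since $\predT\cap T\subseteq T$), together with at most $\eta$ residual edges accounting for the predicted-but-not-arriving vertices, each of cost at most $\opt$ by the previous step. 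Summing the two parts and adding $c(A_2)$ yields the claimed $O(\log k+\eta)\opt$ bound.

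The step I expect to be the main obstacle is the directed analogue of the cycle and exchange arguments that drive Lemmas~\ref{lem:edgecost} and~\ref{lem:mst}: directed graphs lack the undirected MST cycle property, so those arguments must be replaced by a directed shortest-path exchange using $\hat{S}$ as the reference structure. In particular, making the decomposition of $\hat{S}$ into $B_1$ and $\leq\eta$ residual edges precise, and bounding $c(B_1)$ directly by $O(\log k)\opt$ rather than the weaker $O(\log k)\cdot\opt(\predT)\leq O(\log k\cdot\eta)\opt$, is what prevents the analysis from collapsing to a multiplicative $O(\log k\cdot\eta)$ bound; this is the crucial technical step.
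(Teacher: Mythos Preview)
Your plan has a genuine gap: the algorithm you describe is not $O(\log k+\eta)$-competitive, so no analysis can rescue it. The failure is exactly at the step you flag as ``the crucial technical step,'' but it is not merely hard to prove --- it is false.

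Concretely, there is no directed analogue of Lemma~\ref{lem:edgecost}. In the undirected case the MST cycle property guarantees that every edge on the $t_i$--$t_j$ path in $\mst(\predT)$ has cost at most $c(t_i,t_j)\leq\opt$, because both endpoints are real terminals. For a directed arborescence $\hat{S}$ on $\predT\cup\{r\}$ there is no such property: the $r\to t_i$ path in $\hat{S}$ may route through intermediate vertices that are only justified by \emph{wrong} predictions, and those edges can be arbitrarily expensive relative to $\opt$. For instance, take $r\to v$ of cost $M$, $v\to t_1$ and $v\to w_1,\dots,w_{k-1}$ each of cost $1$, $r\to t_1$ of cost $2$, and $r\to w_j$ of cost $M+2$. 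With $\predT=\{t_1,w_1,\dots,w_{k-1}\}$ and $M$ large, the optimal arborescence on $\predT$ uses $r\to v$ and fans out. If the true $T$ contains $t_1$ together with $k-1$ cheap terminals outside $\predT$, then $\opt=O(k)$, $\eta=k-1$, yet your $A_1$ already pays $M$ on the first predicted arrival. Choosing $M\gg k^2$ breaks the $O(\log k+\eta)\opt$ bound.

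The paper avoids this by \emph{not} fixing a single arborescence. It maintains a parameter $\lambda$ (doubled whenever a terminal with larger connection cost arrives), restricts attention to $\predT(\lambda)=\{t\in\predT:c(t,r)\leq\lambda\}$, and recomputes $\mdst(\predT(\lambda))$ at each doubling. The key lemma is then $c(\mdst(\predT(\lambda)))\leq\opt+\lambda\eta$, which holds precisely because every wrong prediction still in $\predT(\lambda)$ has connection cost at most $\lambda$. Summing over the $O(\log k)$ doublings (a geometric series in $\lambda$, capped at $2\opt$) yields the $O(\log k+\eta)\opt$ bound. The filtering by $\lambda$ is what your single-tree approach is missing; without it, one expensive wrong prediction contaminates the paths to correct predictions.
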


The algorithm claimed in Theorem~\ref{thm:directed_main} is $O(\log k)$-consistent and $O(k)$-robust, meaning that the ratio is $O(\log k)$ if $\eta=0$ and is at most $O(k)$ for any $\eta$.  The algorithm for directed graphs builds on the algorithm for undirected graphs. As before, there are two sets of edges $A_1, A_2$. The set $A_1$ contains edges that are bought because a terminal arrives that was not predicted. As in the undirected case such these edges are bought using a greedy algorithm.  The edges in $A_2$ are bought using a different algorithm over the undirected case. 


\smallskip
\noindent \textbf{Online Algorithm with Predicted Terminals in Directed Graphs:} 
Initialize $\lambda = 1$ to be a parameter, which is intuitively a guess of the maximum connection cost of any terminal in $T$. Let $\predT(\lambda) := \{t\in \predT  \;\; | \;\; c(t,r) \leq \lambda \}$ be the set of predicted terminals that have a path to the root of cost at most $\lambda$.   Let $\mdst(\predT(\lambda))$ be the minimum directed Steiner tree of $\predT(\lambda)$, which can be computed by an offline optimal algorithm\footnote{Noting that this problem is NP-hard and  it is known to be inapproximable within a $O(\log k)$ ratio unless $P=NP$~\cite{DBLP:conf/stoc/DinurS14}, we do not have efficient optimal algorithms or approximation algorithms in practice. Thus, the directed case is more for theoretical interests. }. 

Initialize $A_1 = \emptyset$ and $A_2 = \emptyset$. The edges that are bought will be $A = A_1 \cup A_2$. Order the terminals such that $t_i$ arrives before $t_{i+1}$ and let $T_i = \{t_1, t_2, \ldots, t_{i}\}$ be the first $i$ terminals to arrive. Let $\beta_i = \max_{t_j \in T_i} c(t_j,r)$ be the maximum cost of connecting a terminal in $T_i$ directly to the root. 

Consider when a terminal $t_i \in T$ arrives. If $\beta_i > \lambda$ then both  increase $\lambda$ by a factor 2 and  update $\predT(\lambda)$ and $\mdst(\predT(\lambda))$.  Next perform one of the following. 

\begin{itemize}
    \item  If $t_i \notin \predT(\lambda)$ then add the shortest path from $t_i$ to $r$ to $A_1$, buying these edges.
    \item Otherwise, add the unique path from $t_i$ to $r$ in $\mdst(\predT(\lambda))$ to $A_2$.
\end{itemize}

Our goal is to show the following theorem.

\begin{theorem}\label{thm:directed_oapt}
The competitive ratio of the  Algorithm for directed Steiner tree is $O(\log k + \eta)$.
\end{theorem}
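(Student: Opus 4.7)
The plan is to bound $c(A_1)$ and $c(A_2)$ separately, showing $c(A_1)\le \eta\cdot\opt$ and $c(A_2)\le O(\log k + \eta)\opt$, which together give the claimed ratio.

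For $A_1$ I would first observe that whenever $t_i$ arrives, the doubling step is applied (repeatedly) until $\lambda\ge\beta_i\ge c(t_i,r)$. Consequently every $t_i\in\predT$ automatically satisfies $t_i\in\predT(\lambda)$, so edges enter $A_1$ only for terminals in $T\setminus\predT$, a set of size at most $\eta$. Each such terminal contributes a single shortest path from $t_i$ to $r$ of cost $c(t_i,r)\le\opt$, yielding $c(A_1)\le\eta\cdot\opt$.

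For $A_2$, let $\lambda_1=1,\lambda_2=2,\ldots,\lambda_L$ be the successive values of $\lambda$ (powers of two, with $\lambda_L\le 2\beta_k\le 2\opt$) and write $M_\ell=\mdst(\predT(\lambda_\ell))$. During the phase when $\lambda=\lambda_\ell$, every path added to $A_2$ is a sub-path of $M_\ell$, so the edges bought in that phase cost at most $c(M_\ell)$ in total and hence $c(A_2)\le\sum_{\ell=1}^{L} c(M_\ell)$. I would then establish two complementary upper bounds on $c(M_\ell)$: a \emph{crude} one $c(M_\ell)\le k\lambda_\ell$, obtained by connecting each of the $\le k$ nodes of $\predT(\lambda_\ell)$ to $r$ via its own shortest path of cost $\le\lambda_\ell$ (and then keeping a spanning subtree); and a \emph{prediction-aware} one $c(M_\ell)\le\opt+\eta\lambda_\ell$, obtained by taking the optimal tree for $T$ and augmenting it with shortest paths to $r$ for the at most $\eta$ misforecast nodes in $\predT(\lambda_\ell)\setminus T$.

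The final step is a two-regime summation exploiting the minimum of these bounds. Using $\min(a,b+c)\le\min(a,b)+c$, I get $\sum_\ell c(M_\ell)\le\sum_\ell\min(k\lambda_\ell,\opt)+\eta\sum_\ell\lambda_\ell$. The second sum is $O(\opt)$ by the geometric structure of the $\lambda_\ell$, contributing $O(\eta)\opt$. For the first sum I would split at the threshold $\lambda_\ell\approx\opt/k$: when $\lambda_\ell\le\opt/k$ the values $k\lambda_\ell$ form a geometric series bounded by $2\opt$; when $\lambda_\ell>\opt/k$ only $O(\log k)$ such $\lambda_\ell$ exist, since they are powers of two in an interval of ratio $O(k)$, each contributing at most $\opt$. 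This gives $O(\log k)\opt$, hence $c(A_2)\le O(\log k+\eta)\opt$, and combining with the bound on $A_1$ proves the theorem.

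The principal difficulty is that neither bound on $c(M_\ell)$ suffices alone: bound (i) by itself gives a $\Theta(k)$ factor, while bound (ii) by itself only yields $O(\log\lambda_L+\eta)=O(\log\opt+\eta)$ rather than $O(\log k+\eta)$. The $\log k$ factor emerges precisely from the crossover between the two bounds combined with the geometric spacing of the $\lambda_\ell$, so the key step is identifying the correct threshold $\opt/k$ and verifying that exactly $O(\log k)$ doublings sit above it.
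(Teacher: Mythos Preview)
Your proof is correct and follows essentially the same approach as the paper: bound $c(A_1)\le\eta\cdot\opt$ directly, bound $c(A_2)$ by $\sum_\ell c(M_\ell)$, establish the same two upper bounds on $c(M_\ell)$ (the crude $k\lambda_\ell$ bound and the Lemma~\ref{lem:directed_oapt} bound $\opt+\eta\lambda_\ell$), and split the sum at a threshold so that the low regime is a geometric series and the high regime has $O(\log k)$ terms. The only cosmetic differences are that you split at $\opt/k$ while the paper splits at $\opt/k^2$, and you pull the $\eta$ term out first via $\min(a,b+c)\le\min(a,b)+c$ before splitting, whereas the paper keeps the two bounds intact and applies them in separate ranges; both lead to the same $O(\log k+\eta)$ bound.
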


Before proving the theorem, we show a technical lemma.

\begin{lemma}\label{lem:directed_oapt}
For any $\lambda$, $c(\mdst(\predT(\lambda))) \leq \opt + \lambda \eta$.
\end{lemma}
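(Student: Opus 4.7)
The plan is to upper-bound $c(\mdst(\predT(\lambda)))$ by exhibiting \emph{some} feasible directed Steiner tree for $\predT(\lambda)$ of cost at most $\opt + \lambda \eta$; minimality then finishes the argument.

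The starting point is the optimal solution $\opt$ on the true terminal set $T$, viewed as a subgraph that provides a directed path from every $t \in T$ to the root $r$. This already takes care of all terminals in $\predT(\lambda) \cap T$. The terminals that remain to be covered are exactly those in $\predT(\lambda) \setminus T$. For each such terminal $u$, I will simply add the shortest directed path from $u$ to $r$ in $G$; by the definition of $\predT(\lambda)$, this path has cost $c(u,r) \leq \lambda$.

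It remains to bound the number of added paths. Since $\predT(\lambda) \setminus T \subseteq \predT \setminus T$ and $|\predT| = |T| = k$, we have $|\predT \setminus T| = |\predT| - |\predT \cap T| = k - |T\cap \predT| = \eta$, so at most $\eta$ extra paths are added, contributing at most $\lambda \eta$ in total cost. The union of $\opt$ with these paths is a subgraph in which every vertex of $\predT(\lambda)$ has a directed path to $r$, hence contains a feasible directed Steiner tree for $\predT(\lambda)$, of total cost at most $\opt + \lambda \eta$. Since $\mdst(\predT(\lambda))$ is the minimum such tree, the claim follows.

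There is no real obstacle here: the argument is a direct feasible-solution construction. The only mild subtlety is making sure the counting uses the right side of the symmetric identity $k - |T\cap \predT| = |\predT \setminus T| = |T \setminus \predT|$ (this uses $|T|=|\predT|=k$, which was assumed at the start of Section~\ref{sec:undirected_case} and carries over here), and noting that intersecting with $\predT(\lambda)$ can only decrease the count. Everything else is just writing down that the union of subgraphs with appropriate paths to $r$ is still a valid Steiner subgraph.
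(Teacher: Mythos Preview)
Your proof is correct and follows exactly the same approach as the paper: construct a feasible Steiner subgraph for $\predT(\lambda)$ by taking the optimal tree on $T$ (covering $\predT(\lambda)\cap T$) and augmenting with the shortest path to $r$ for each $u\in\predT(\lambda)\setminus T$, then invoke the optimality of $\mdst(\predT(\lambda))$. If anything, you are slightly more explicit than the paper in justifying $|\predT(\lambda)\setminus T|\le|\predT\setminus T|=k-|T\cap\predT|=\eta$ via the standing assumption $|\predT|=|T|=k$.
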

\begin{proof}
The proof idea is to construct a feasible Steiner tree of $\predT(\lambda)$ whose value is at most $\opt + \lambda \eta$. Then the inequality will hold due to the optimality of $\mdst(\predT(\lambda))$. The feasible tree is constructed as follows: connect all terminals in $\predT(\lambda) \cap T$ to the root in the same way as the optimal solution and add the shortest path from $t$ to $r$ for each terminal $t\in \predT(\lambda) \setminus T$. The total cost of the former part is at most $\opt$ while the latter term incurs a cost of $\lambda \eta$ since $c(t,r) \leq \lambda$ for any terminal $t\in \predT(\lambda)$. Thus, the total cost of this subgraph is at most $\opt + \lambda \eta$, implying that $c(\mdst(\predT(\lambda))) \leq \opt + \lambda \eta$.
\end{proof}

We can now prove the main theorem. Due to space see  Appendix~\ref{sec:omit}. 



\section{Experimental results}\label{sec:exp}
\begin{figure*}[t]
\centering
\subfigure[]{
    \centering
    \includegraphics[width=0.3\textwidth]{ 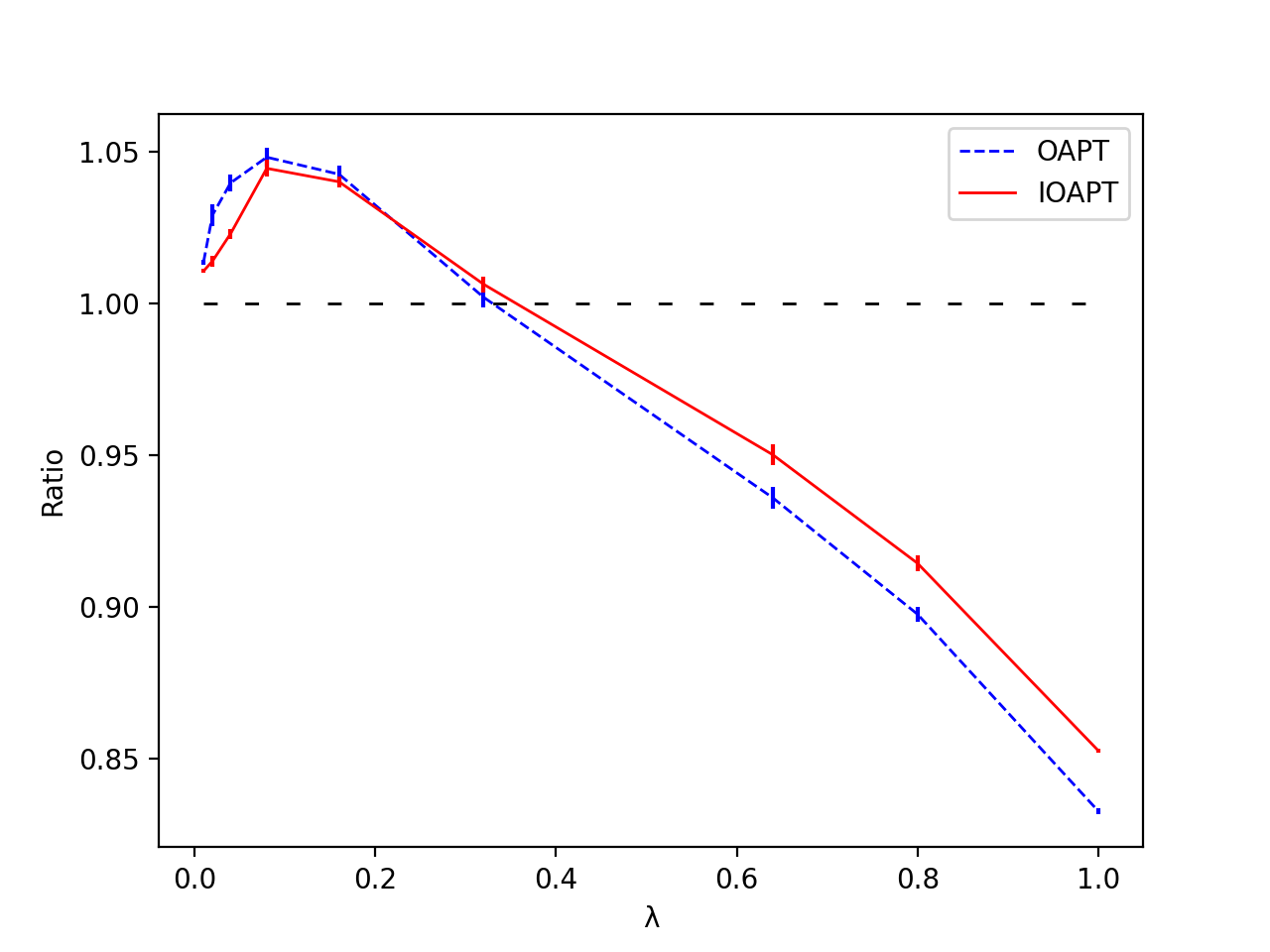}
    \label{fig:random_robust}
    }
\subfigure[]{
    \centering
    \includegraphics[width=0.3\textwidth]{ 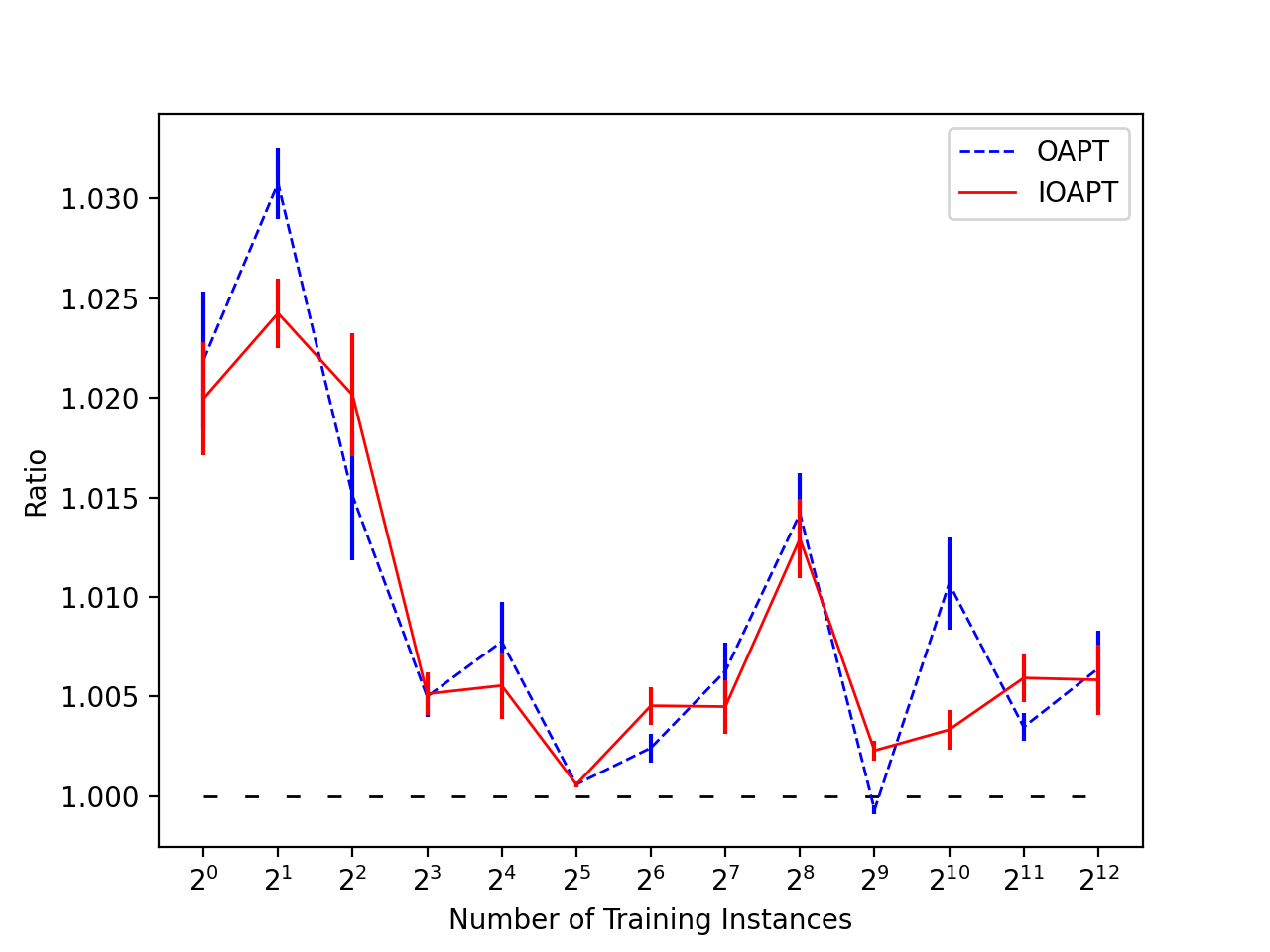}
    \label{fig:random_uniform}
    }
\subfigure[]{  
    \centering
    \includegraphics[width=0.3\textwidth]{ 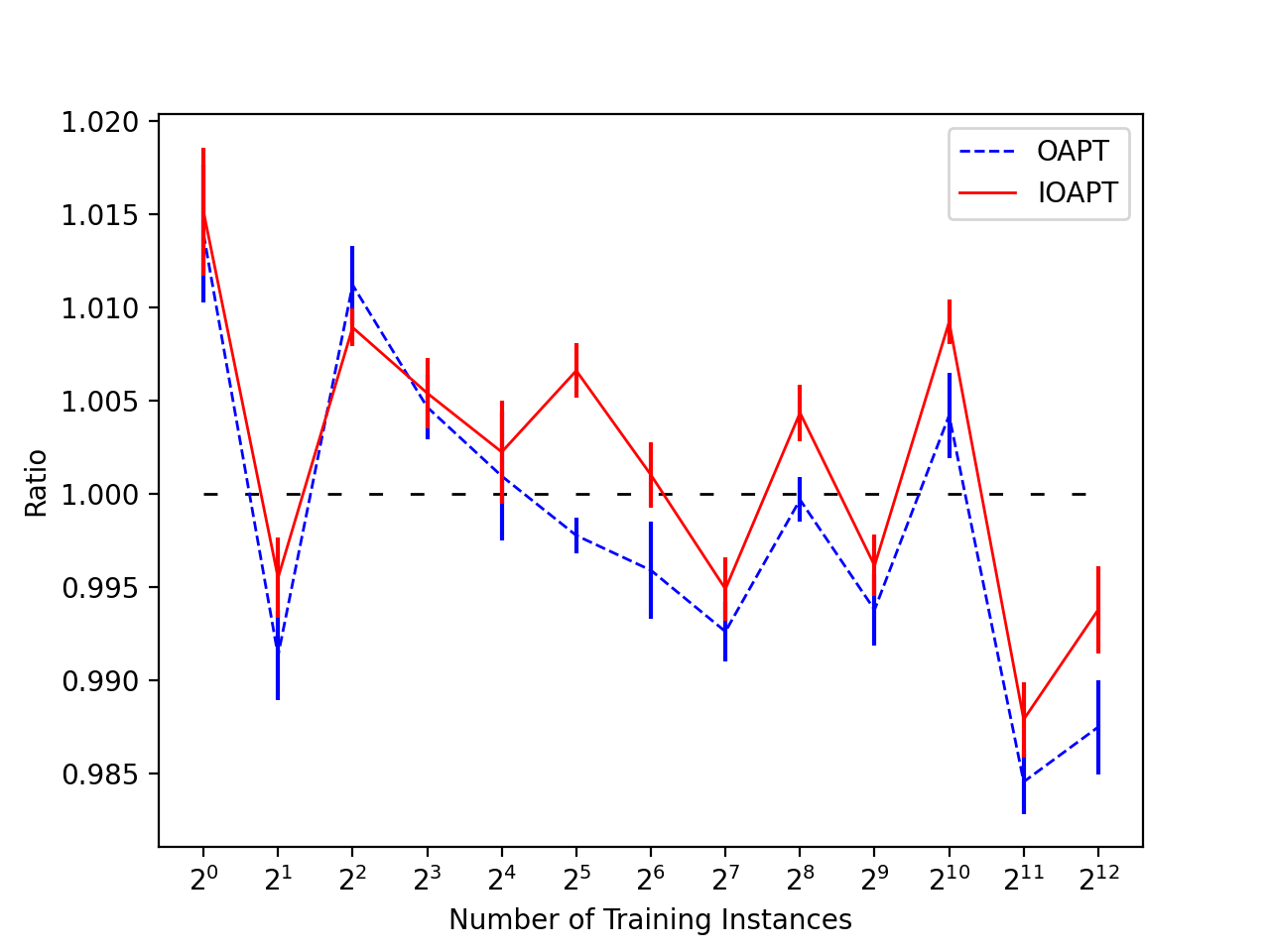}
    \label{fig:random_good_distri}
    }

    \caption{The experimental results on random graphs. The ratio is the algorithm's performance relative to the baseline.  Fig.~\ref{fig:random_robust} shows the performance of algorithms over different $\lambda$'s, corresponding to the robustness experiment. 
    Fig.~\ref{fig:random_uniform} and Fig.~\ref{fig:random_good_distri} are, respectively, the algorithms' performance over the number of training instances on the uniform distribution and the two-class distribution. Their corresponding prediction errors are deferred to Appendix~\ref{sec:prediction_error}. Note that some of the $x$-axes are on log-scale.
    }
    \label{fig:random_graph}
\end{figure*}

\begin{figure*}[t]
\centering
\subfigure[]{
    \centering
    \includegraphics[width=0.3\textwidth]{ 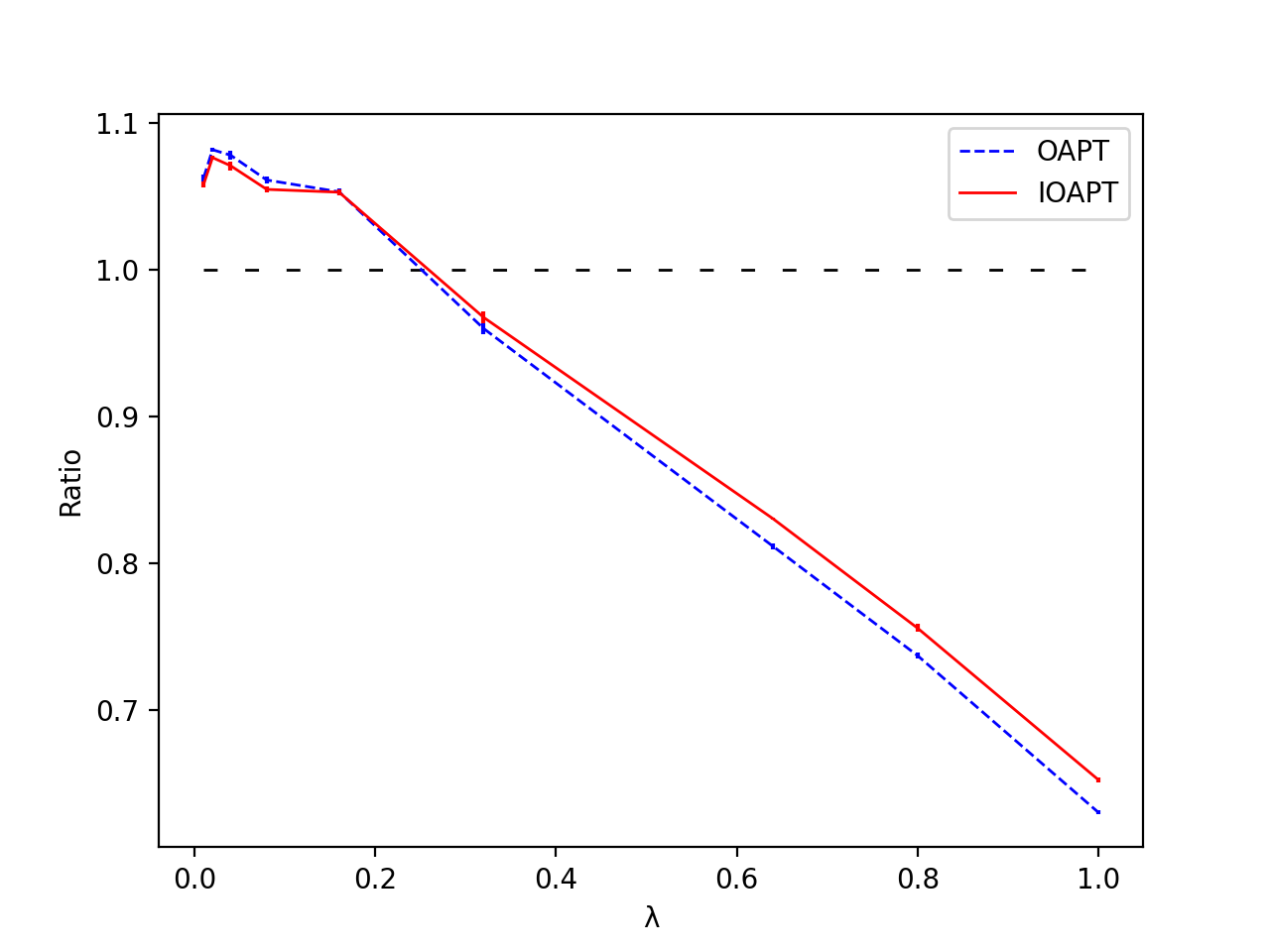}
    \label{fig:road_robust}
    }
\subfigure[]{
    \centering
    \includegraphics[width=0.3\textwidth]{ 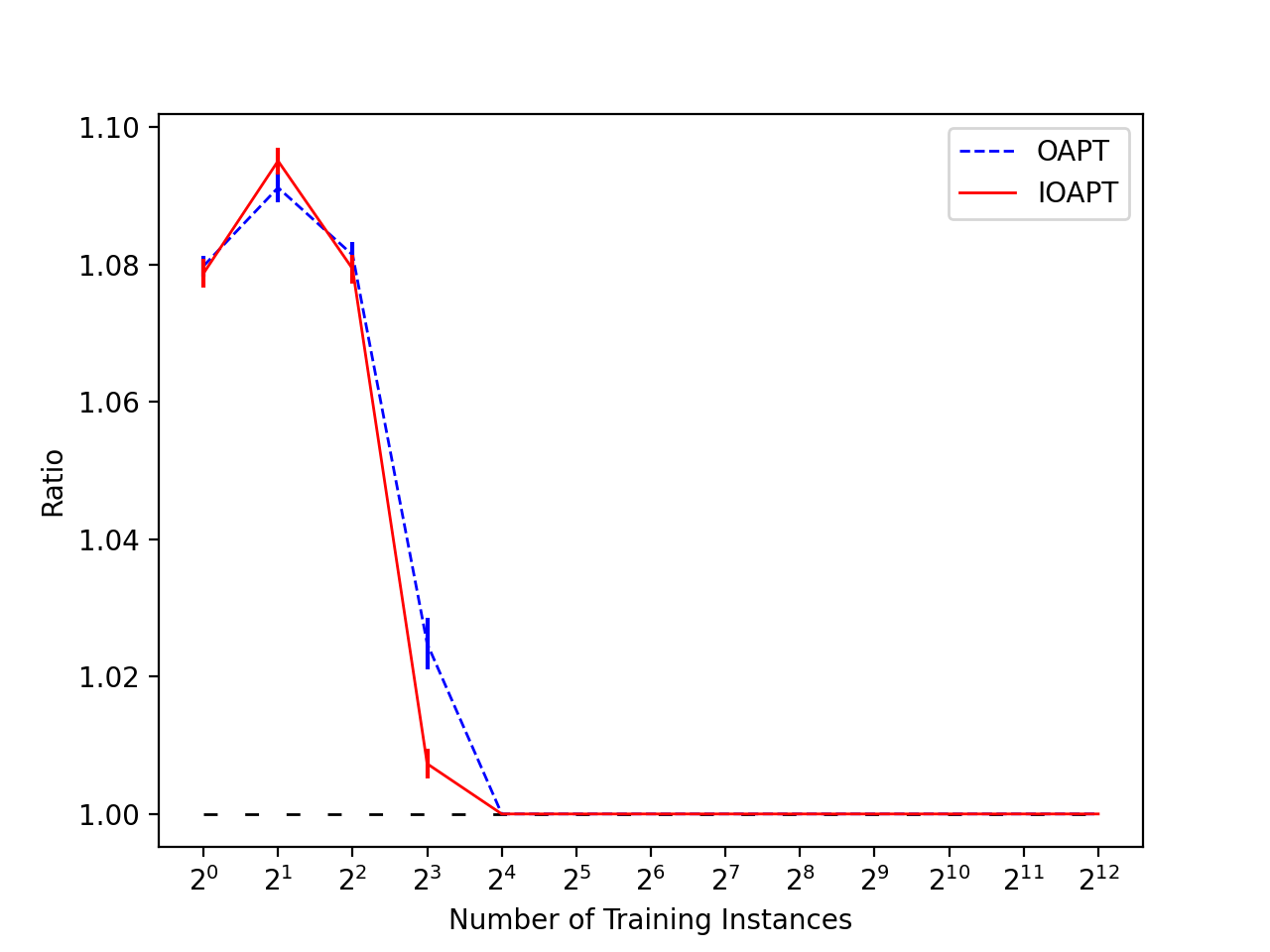}
    \label{fig:road_cluster_x10}
    }
\subfigure[]{
    \centering
    \includegraphics[width=0.3\textwidth]{ 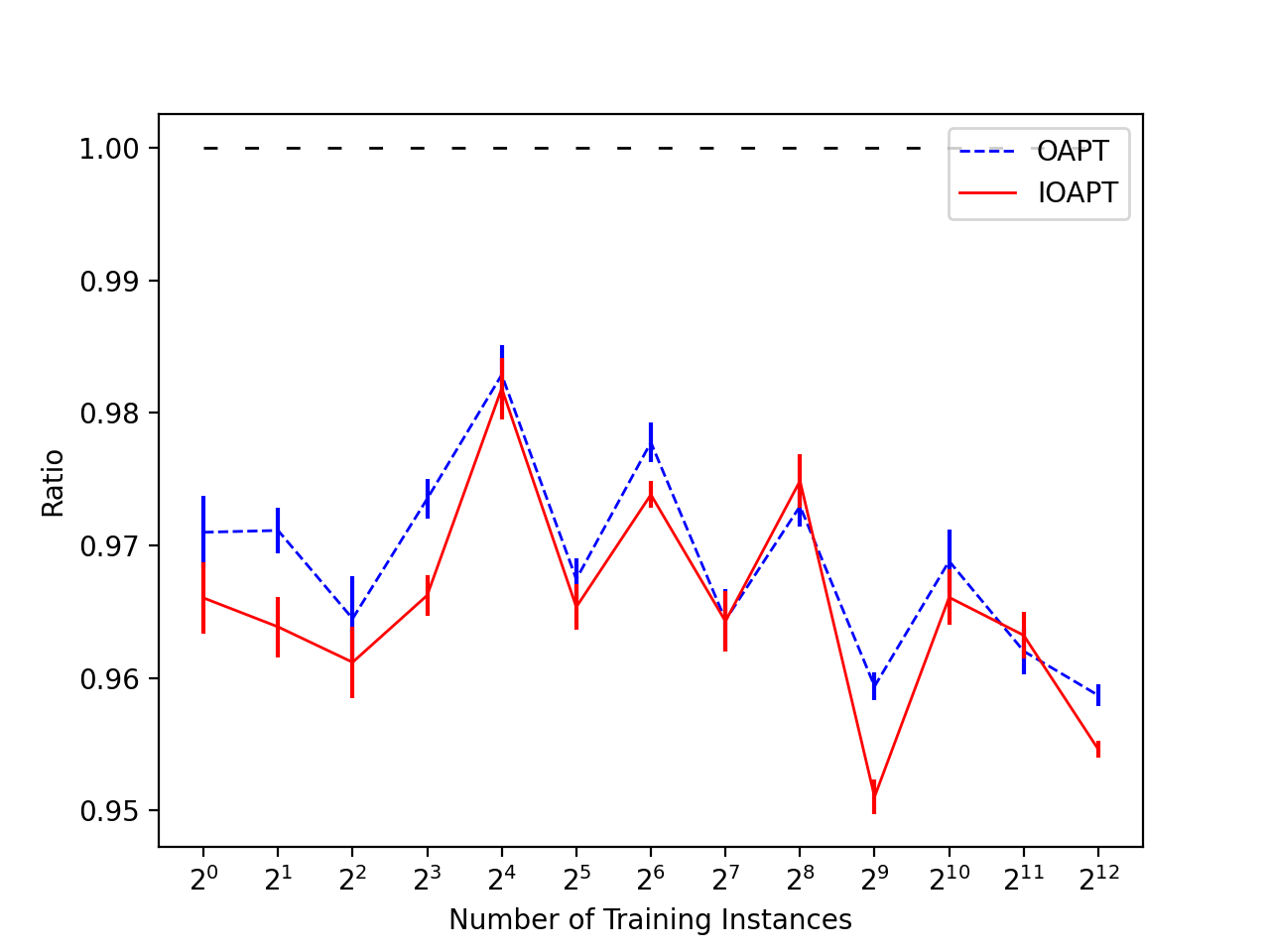}
    \label{fig:road_cluster_x100}
    }
    \caption{Results on road graphs. Fig.~\ref{fig:road_robust} shows the performance of algorithms over different $\lambda$'s.  Fig.~\ref{fig:road_cluster_x10} and Fig.~\ref{fig:road_cluster_x100} are, respectively, the algorithms' performance and prediction error over different numbers of training instances when $x=10$ and 100. The corresponding prediction errors are deferred to Appendix~\ref{sec:prediction_error}. }
    \label{fig:road_graph}
\end{figure*}

This section investigates the empirical performance of the proposed algorithm OAPT and IOAPT for the undirected Steiner tree problem. 
The goal is to answer the following two questions:
\begin{itemize}
    \item Robustness - How much prediction accuracy does the algorithms need to outperform the baseline algorithm empirically? 
    \item Learnability - How many samples are required to empirically learn predictions sufficient for the algorithms to perform better than the baseline? 
\end{itemize}

The baseline we compare against is the online greedy algorithm which is the best traditional online algorithm. 
We investigate the performance of both OAPT and IOAPT.



\subsection{Setup} 
The experiments\footnote{The code is available at \url{https://github.com/Chenyang-1995/Online-Steiner-Tree}} are conducted on a machine running Ubuntu 18.04 with an i7-7800X CPU and 48 GB memory.  Experiments are averaged over 10 runs. We consider two types of graphs. 


\paragraph{Random Graphs.}
The number of nodes in a graph is set to be 2,000 and 50,000 edges are selected uniformly at random. The cost of each edge is an integer sampled uniformly from $[1 , 1000]$. To ensure the connectivity of graphs, we add all remaining edges to form a complete graph, given the edges high cost of 100,000.

\paragraph{Road Graphs.}
The road network of Bay Area is provided by The 9th DIMACS Implementation Challenge\footnote{http://users.diag.uniroma1.it/challenge9/download.shtml} in graph format where a node denotes a point in Bay Area and the cost of an edge is the road length between the two endpoints. The original data contains 321,270 nodes and 400,086 edges. In the experiments, we employ the same sampling method as in~\cite{DBLP:conf/aistats/MoseleyVW21} to sample connected subgraphs from this large graph. 
Briefly, we draw rectangles with a certain size on the road network randomly and construct a subgraph from each rectangle. 
The experiments employ 4 sampled subgraphs with $23512 \pm 1135$ nodes and $ 31835 \pm 1815$ edges. These graphs give the same trends, thus, we show one such graph in the main body and others appear Appendix~\ref{sec:different_graph}. 

The terminal set $T$ and the prediction $\predT$ are constructed differently depending on the experiments. 

\subsection{Robustness to Accuracy} 
This experiment tests the performance of OAPT and IOAPT when the  prediction accuracy varies. Recall that $k$ is the number of terminals. We set $k=200$ and $2,000$ respectively for random graphs and road graphs unless stated otherwise\footnote{We also conduct experiments with different numbers of terminals. The results are present in Appendix~\ref{sec:more_terminals}.}.  The set $T$ of $k$ terminals are sampled uniformly from the vertex set $V$.  They arrive in random order. 


We now construct the predictions. Let $\lambda \in [0,1]$ be a parameter corresponding to the prediction accuracy. 
First, we sample a node set $\predT_0$ with $k\lambda$ nodes uniformly from the terminal set $T$. And then another node set $\predT_1$ with $k(1-\lambda)$ nodes is sampled uniformly from the non-terminal nodes $V\setminus T$. Let $\predT_0 \cup \predT_1$ be the predicted terminal set $\predT$. 
Notice that $\lambda$ indicates the prediction accuracy. Thus, testing the performance of algorithms with different $\lambda$'s answers the robustness question.  This experiment is  in Fig.~\ref{fig:random_robust} and~\ref{fig:road_robust}.


\subsection{Learning the Terminals} 

Here we construct instances where the algorithm explicitly learns the terminals. Each such instance will have a distribution over terminal sets of size $k$ and employ random order.  
We will sample $s$ training instances of $k$ terminals $T_1, T_2, \ldots T_s$. The learning algorithm used to predict terminals is defined as follows.

\paragraph{The Learning Algorithm.}
A node $v$ is predicted to be in $\predT$ with probability $f(v)/s$ if $f(v) > \theta s$, where $f(v)$ is the number of sampled sets in which node $v$ appears and $\theta$ is a parameter in $[0,1]$.
Note that the number of predicted terminals may not equal $k$.

There is a question on how to choose $\theta$.  This is done as follows.  We choose an instance $T_i$ from the training set at random and check which $\theta$ would give OAPT (IOAPT) the best performance on this instance.  We then use this $\theta$ for OAPT (IOAPT) on the online instance.  For efficiency, we only consider $\theta\in \{0,0.2,0.4,0.6,0.8,1\}$.

\paragraph{Distribution for Random Graphs.}
Two distributions are considered for random graphs. The first is
 a bad distribution where there is nothing to learn, the uniform distribution.  In this case, all terminals are drawn uniformly from $V$.   
 The second is called a two-class distribution where there is a set of nodes to learn. Let  $V_h$  be a small collection of nodes that will be terminals with higher probability.  $V_h$ is set to $400$ nodes uniformly at random. Let $k=200$ be the number of terminals.  Half are drawn from $V_h$ and half from $V \setminus V_h$. Here we hope the learning algorithm quickly learns $V_h$, and further, our algorithms can take advantage of the predictions.
 The results appear in Fig.~\ref{fig:random_uniform} and~\ref{fig:random_good_distri}. 





\paragraph{Distribution for Road Graphs.}  This experiment is designed to model the case where terminals can appear in geographical similar locations. The graph will be clustered and a specified number of terminals will arrive per cluster following a distribution over nodes in the cluster. 
Use $r$ to denote the radius of the graph. Given a parameter $\sigma$, partition all nodes into several clusters such that the radius of each cluster is at most $\sigma r$. The greedy clustering algorithm~\cite{DBLP:journals/tcs/Gonzalez85} is used and described in Appendix~\ref{sec:code}.  We let $\sigma=0.1$ in the experiments unless state otherwise. The terminal set $\predT$ is obtained by picking $\lfloor 2000/x \rfloor$ random clusters and sampling $x$ terminals uniformly from each selected cluster.
We let $x$ be 10 and 100. When $x=10$ the distribution is harder to learn than when $x=100$. See Fig.~\ref{fig:road_cluster_x10} and~\ref{fig:road_cluster_x100} for the results.

\subsection{Empirical Discussion} We see the following trends.
\begin{itemize}
    \item Both Fig.~\ref{fig:random_robust}~\ref{fig:road_robust} show that the algorithms perform well on different graphs even with modestly correct predictions.  Once about $30\%$ of the predictions are correct, the algorithms perform better than the baseline.
    
    \item Fig.~\ref{fig:random_uniform}~\ref{fig:road_cluster_x10} show the algorithms are robust for difficult distributions, which are sparse distributions where there is effectively nothing to learn. The learning algorithm will quickly realize that  predictions cause negative effects and then output very few predicted terminals (see prediction errors in Appendix~\ref{sec:prediction_error} for more corroborating experiments). After tens of training instances, the ratios become never worse than 1.01. 
    \item  
    Fig.~\ref{fig:random_good_distri}~\ref{fig:road_cluster_x100} show the learning algorithm quickly learns good distributions. Further, both online algorithms have strong performance using the predictions.  We conclude that with a small number of training samples, the learning algorithm is able to learn useful predictions sufficient for the online algorithms to outperform the baseline.  
\end{itemize}

These experiments corroborate the theory.  The algorithms obtain much better performance than the baseline even with modestly good predictions. If
given very inaccurate predictions, the algorithms are barely worse than the baseline. Moreover, we see that only a small number of sample instances are needed for the algorithms to have competitive performance when terminals  arrive from a good distribution.



\section{Conclusion}
Online Steiner tree is one of the most fundamental online network design problems. It is a special case or a sub-problem of many online network design problems. Steiner tree captures the challenge of building networks online and, moreover, Steiner tree algorithms are often used as building blocks or subroutines for more general problems. As the community expands the learning augmented algorithms area into more general online network design problems, this paper provides models, and algorithmic and analysis techniques that can be leveraged for these problems.

\newpage
\section*{Acknowledgements}
Chenyang Xu was supported in part by Science and Technology Innovation 2030 –"The Next Generation of Artificial Intelligence" Major Project No.2018AAA0100902.
Benjamin Moseley was supported in part by NSF grants  CCF-1824303, CCF-1845146, CCF-2121744, CCF-1733873 and CMMI-1938909.  Benjamin Moseley was additionally supported in part by a Google Research Award, an Infor Research Award, and a Carnegie Bosch Junior Faculty Chair.
We thank Yuyan Wang for sharing their experimental data and thank the anonymous reviewers for their insightful comments and suggestions. Further, we thank Mirko Giacchini for the discussion on some implementation details of the proposed algorithms.

\bibliography{ref}

\newpage

\appendix
\section{Pseudo-codes for Algorithms}\label{sec:code}

This section presents the pseudo-codes of algorithms mentioned in previous sections: algorithm OAPT, algorithm IOAPT, the algorithm for directed graphs, the learning algorithm and the clustering algorithm (a natural variant of the greedy algorithm given in~\cite{DBLP:journals/tcs/Gonzalez85}). 

\begin{algorithm}[htbp]
\caption{Online algorithm with predicted terminals (OAPT)}
\label{algo:oapt}
\KwIn{The graph $G=(V,E)$, the edge cost function $c$, the terminal set $T$ which shows up online and the predicted terminal set $\predT$ }

Compute a minimum spanning tree $\mst(\predT)$ of the prediction $\predT$.

Define our solution $A:=A_1 \cup A_2$ and initially, $A_1,A_2\leftarrow \emptyset.$

\While{a terminal $t_i$ arrives}
{
    Let $T_{i-1}$ be the set of terminals arriving till the last iteration.
    
    \If{$t_i\notin \predT$ or $t_i$ is the first arriving vertex in $\predT$}
    {
        Add the shortest edge connecting $t_i$ to $T_{i-1}$ in $G$ into $A_1$.
    }
    \Else
    {
        Add the shortest path connecting $t_i$ to $\predT \cap T_{i-1}$ in $\mst(\predT)$ into $A_2$.
    }
}
\KwOut{Edge set $A$}
\end{algorithm}

\begin{algorithm}[htbp]
\caption{Improved online algorithm with predicted terminals (IOAPT)}
\label{algo:ioapt}
\KwIn{The graph $G=(V,E)$, the edge cost function $c$, the terminal set $T$ which shows up online and the predicted terminal set $\predT$ }
Compute a minimum spanning tree $\mst(\predT)$ of the prediction $\predT$.

Define our solution $A:=A_1 \cup A_2$ and initially, $A_1,A_2\leftarrow \emptyset.$

\While{a terminal $t_i$ arrives}
{
    Let $T_{i-1}$ be the set of terminals arriving till the last iteration.
    
    \If{$t_i\notin \predT$ or $t_i$ is the first arriving vertex in $\predT$}
    {
        Add the shortest edge connecting $t_i$ to $T_{i-1}$ in $G$ into $A_1$.
    }
    \Else
    {
        Add the shortest edge $e_i$ connecting $t_i$ to $\predT \cap T_{i-1}$ in $G$ into $A_2$.
        
        Find the shortest path $P_i$ connecting $t_i$ to $\predT \cap T_{i-1}$ in $\mst(\predT)$.
        
        Find a sub-path $P_i' \subseteq P_i$ such that its endpoints contain $t_i$ while its total cost is in $[c_{e_i},2c_{e_i}]$.
        
        Add path $P_i'$ into $A_2$.
    }
}
\KwOut{Edge set $A$}
\end{algorithm}


\begin{algorithm*}[htbp]
\caption{Online algorithm with predicted terminals on directed graphs}
\label{algo:directed_oapt}
\KwIn{The graph $G=(V,E)$, the edge cost function $c$, the terminal set $T$ which shows up online and the predicted terminal set $\predT$ }

Define our solution $A:=A_1 \cup A_2$ and initially, $A_1,A_2\leftarrow \emptyset.$

Initialize a parameter $\lambda$ with 1 and define $\predT(\lambda) := \{t\in \predT | c(t,r) \leq \lambda \}$, where $c(t,r)$ is the cost of the shortest path from $t$ to $r$.

Let $\mdst(\predT(\lambda))$ be the minimum directed Steiner tree of $\predT(\lambda)$, which can be computed by an offline optimal algorithm.

\While{a terminal $t_i$ arrives}
{
    Let $T_i$ be the terminals arriving so far and define $\beta_i := \max_{j\leq i} c(t_j,r)$.
    
    \If{$\beta_i > \lambda$}
    {
        $\lambda \leftarrow 2\lambda$.
        
        Update $\predT(\lambda)$ and $\mdst(\predT(\lambda))$.
    }
    
    \If{$t_i\notin \predT$}
    {
        Add the shortest path from $t_i$ to $r$ into $A_1$.
    }
    \Else
    {
        Add the unique path from $t_i$ to $r$ in $\mdst(\predT(\lambda))$ into $A_2$.
    }
    
}

\KwOut{Edge set $A$}
\end{algorithm*}

\begin{algorithm*}[htbp]
\caption{Learning the terminals}
\label{algo:learning_algo}
\KwIn{The graph $G=(V,E)$, $s$ training instances and an augmenting algorithm $A$ }
\SetKwFunction{Pred}{\textbf{ P}}
	\SetKwProg{Fn}{Function}{:}{}
	\Fn{\Pred{$\theta$}}{
		$\predT \leftarrow \emptyset$
		
		\ForEach{$v\in V$}
		{
		    Let $f(v)$ be the number of instances where $v$ is a terminal.
		    
		    \If{$f(v) > \theta s$}
		    {
		        Add $v$ into $\predT$ with probability $f(v)/s$.
		    }
		}
		
	\textbf{return} $\predT$	
	}

\vspace{2mm}

\textbf{End Function}

Sample an instance $\cI$ uniformly from training instances.

\ForEach{$\theta \in \{0,0.2,0.4,0.6,0.8,1\}$}
{

Use\Pred{$\theta$} to sample a predicted terminal set $\predT(\theta)$.

Compute $A(\cI,\predT(\theta))$, which is the objective value of algorithm $A$ augmented with prediction $\predT$ on instance $\cI$.
}

Pick the $\predT(\theta)$ with the best $A(\cI,\predT(\theta))$ and let it be $\predT$.

\KwOut{Predicted terminal set $\predT$}
\end{algorithm*}

\begin{algorithm*}[htbp]
\caption{A variant of greedy clustering algorithm}
\label{algo:cluster}
\KwIn{The graph $G=(V,E)$, the edge cost function $c$ and a threshold $\sigma$ }

Use $r$ to denote the radius of graph $G$.

Maintain two node sets $A$ and $C$. Set $A\leftarrow V$ and $C\leftarrow \emptyset$ initially.

\While{$A$ is non-empty}
{
    For a node $v$, define $d(v,C) := \min_{w\in C}c(v,w)$, where $c(v,w)$ is the minimum edge cost connecting $v$ and $w$. Let $d(v,C) = 0$ if $C=\emptyset$
    
    Pick a node $v\in A$ with maximum $d(v,C)$.
    
    Add $v$ into $C$: $C\leftarrow C \cup \{v\}$.
    
    Remove all nodes $u\in A$ with $c(v,u) \leq \sigma r$ from $A$ and let them belong to the cluster centered at $u$.  
    
}
\end{algorithm*}
\newpage

\section{Omitted Proofs}
\label{sec:omit}

\begin{proof}[\textbf{Proof of Lemma~\ref{lem:lower}}]
The proof follows by construction of an instance of online Steiner tree where OAPT has cost $\eta \opt$.

The instance is shown in Fig.~\ref{fig:oapt_lower_bound_restate}. There is an edges $v_1v_i$ of cost $\frac{1}{(k-2)^2}$ for $i \in \{2,3,\ldots k-1\}$. The edge $v_1v_k$ has cost $1+\frac{1}{(k-2)^2}$.  There is a cycle $v_1, v_k, v_{k+1}, \ldots, v_{2k-2}, v_1$ where all edges have cost 1 except $v_iv_k$. Let the terminal set $T=\{v_1,...,v_k\}$ and the predicted terminal set $\predT = \{ v_1,v_k,v_{k+1},...,v_{2k-2}\} $. Thus, the prediction error $\eta=k-2$. 
For the terminal set $T$, we can easily find its optimal solution (illustrated by all solid edges in Fig.~\ref{fig:oapt_lower_bound_restate}). That is the edges $v_1v_i$ for $i \in \{2,3,\ldots, k\}$.  The cost of this solution is $\opt = 1+\frac{k-1}{(k-2)^2}$.

\begin{figure}[hbpt]
    \centering
    \includegraphics[width=0.7\linewidth]{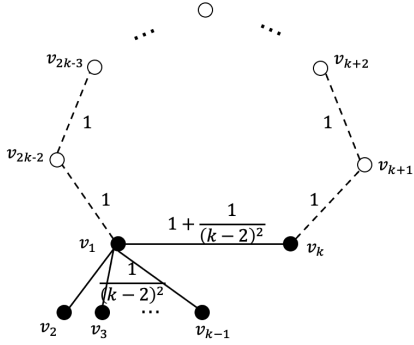}
    \caption{The same online Steiner tree instance as Fig.~\ref{fig:oapt_lower_bound}. The graph has $2k-2$ nodes and $2k-2$ edges. 
    The cost of edge $(v_1,v_k)$ is $1+1/(k-2)^2$ while each edge between $v_1$ and $v_i$ ($2\leq i\leq k-1$) has a cost $1/(k-2)^2$. The cost of each dash edge is 1. The terminal set $T$ consists of $v_1,v_2,...,v_k$ while the prediction is $\predT = \{v_1,v_k,v_{k+1},...,v_{2k-2}\}$.}  
    \label{fig:oapt_lower_bound_restate}
\end{figure}

Consider OAPT. The algorithm first computes its minimum spanning tree, which is represented by $k-1$ dash edges in Fig.~\ref{fig:oapt_lower_bound_restate}. This is the edges of the cycle $v_1, v_k, v_{k+1}, \ldots, v_{2k-2}, v_1$. The cost of $\mst(\predT)$ is $k-1$. 

Let the terminals arrive in the order of $v_1,v_k,v_2,v_3,...,v_{k-1}$. When the second terminal $v_k$ arrives, algorithm OAPT finds $v_{k}$ is a terminal in $\predT$ and add the shortest path from $v_k$ to $v_1$ in $\mst(\predT)$ , which is the path $(v_k,v_{k+1},...,v_{2k-2},v_1)$. The cost gained by this step is the cost of $\mst(\predT)$, i.e., $k-1$. For each remaining terminal, algorithm OAPT adds the shortest edge with cost $1/(k-2)^2$. Thus, the final cost of algorithm OAPT is $k-1+\frac{1}{k-2}=k-2+\frac{k-1}{k-2} = (k-2)\opt = \eta \opt.$
\end{proof}

\begin{proof}[\textbf{Proof of Theorem~\ref{thm:directed_oapt}}]
The analysis of $c(A_1)$ is  easy. We know  the shortest path from $t_i$ to $r$ is always no larger than $\opt$ and the first case occurs $\eta$ times, thus   $c(A_1) \leq \eta \opt.$

Now consider edges in $A_2$. According to the definition of $\predT(\lambda)$, the total cost of $\mdst(\predT(\lambda))$ is at most $k\lambda$.  This is because a feasible solution can be obtained easily by choosing the shortest path for each terminal, whose objective value is at most $k\lambda$. 

When $\lambda > \opt / k^2$, the value of $\lambda$ doubles at most $2\log k$ times since $\lambda \leq 2\opt$ according to the doubling condition. Each time $\lambda$ doubles, a new tree $\mdst(\predT(\lambda))$ is generated. Ideally, if we can bound each $\mdst(\predT(\lambda))$ by a constant factor of $\opt$, $c(A_2)$ can be bounded by $O(\log k) \opt$.



Let $\ell$ be the smallest integer such that $2^\ell > \frac{\opt}{k^2}$. 
The following summations are over the different powers of two that $\lambda$ can take on. 

\begin{equation*}
    \begin{aligned}
        c(A_2) &= \sum_{i=1}^{\ell-1} c(\mdst(\predT(2^i))) + \sum_{i = \ell}^{\ell + 2\log k}c(\mdst(\predT(\lambda))) \\
        & \leq \sum_{i=1}^{\ell-1} k 2^i + \sum_{i = \ell}^{\ell + 2\log k} (\opt + 2^{i}\eta) \;\;\;\; \mbox{[Lemma~\ref{lem:directed_oapt}]} \\
     & = k 2^\ell + \sum_{i = \ell}^{\ell + 2\log k} (\opt + 2^{i}\eta) \;\;\;\;  \\
     & \leq \frac{2\opt}{k} + \sum_{i = \ell}^{\ell + 2\log k} (\opt + 2^{i}\eta) \;\;\;\; \mbox{[Definition of $\ell$]} \\
 & = \frac{2\opt}{k} + (2\log k)\opt + \sum_{i = \ell}^{\ell + 2\log k}  2^{i}\eta\\
  & \leq \frac{2\opt}{k} + (2\log k)\opt + (2\log k)\eta\\
  &\;\;\;\; \mbox{[ $2^{\ell +2\log k} \leq 2 \opt$]}
    \end{aligned}
\end{equation*}


\end{proof}

Now we state a theorem to show that our dependence of $\eta$ in the directed case is optimal.
\begin{theorem}\label{thm:directed_lower_bound}
For online directed Steiner tree with predicted terminals, given any $\eta\geq 1$, no online algorithm has a competitive ratio better than $\Omega(\eta)$.
\end{theorem}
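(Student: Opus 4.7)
The plan is to reduce from the classical $\Omega(k)$ lower bound for online directed Steiner tree (without predictions) already cited in the introduction~\cite{DBLP:journals/ipl/WestbrookY95}. Given any $\eta \geq 1$, I would embed a hard no-prediction instance on $\eta$ terminals into a learning-augmented instance on $k$ total terminals with prediction error exactly $\eta$, arranged so that the prediction carries no information about the hard subinstance. The classical lower bound on the hard part is then inherited unchanged.

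Concretely, I would first invoke the classical lower bound to obtain an adversarial instance $(G', r, T')$ with $|T'| = \eta$ on which every online algorithm incurs cost $\Omega(\eta) \cdot \opt(G', r, T')$. I would then extend $G'$ by attaching $k-\eta$ auxiliary ``easy'' vertices $x_1, \ldots, x_{k-\eta}$, each joined from $r$ by an edge of cost $\epsilon = 1/k$, together with $\eta$ isolated ``decoy'' vertices $y_1, \ldots, y_\eta$ placed in a disconnected region of $G$ whose only incident edges have prohibitively large cost. Set the true terminal set to $T = T' \cup \{x_1, \ldots, x_{k-\eta}\}$, revealed in the order: first the easy vertices, then the classical adversarial arrival order for $T'$; and set $\predT = \{x_1, \ldots, x_{k-\eta}\} \cup \{y_1, \ldots, y_\eta\}$, so that $|T \cap \predT| = k-\eta$ and the prediction error equals exactly $\eta$ as required.

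Next I would argue that the prediction cannot help the algorithm on the hard subinstance. The easy vertices contribute at most $(k-\eta)\epsilon = O(1)$ to both the algorithm's cost and $\opt(G, r, T)$, so they can be ignored up to an additive constant. Because the decoys $y_j$ are unreachable cheaply from $r$ and disjoint from $G'$, they are useless for connecting any vertex of the hard subinstance. Consequently, for any deterministic online algorithm $A$ in the learning-augmented model, fixing this $\predT$ turns $A$ into a deterministic no-prediction online algorithm $A'$ on the hard arrivals in $(G', r, T')$. The classical lower bound forces $A'$ to pay $\Omega(\eta) \cdot \opt(G', r, T')$, while $\opt(G, r, T) \leq \opt(G', r, T') + O(1)$, so the competitive ratio of $A$ is $\Omega(\eta)$.

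The main obstacle is making this reduction airtight: one must verify that an algorithm cannot extract any side information from the fixed $\predT$ that would let it circumvent the no-prediction lower bound on the hard arrivals. Placing the decoys in a disconnected (or prohibitively expensive) region of $G$ and choosing the easy vertices' edges entirely disjoint from $G'$ guarantee that $A$'s decisions on the hard arrivals can be replayed verbatim by $A'$, so that the classical adaptive adversary applies without modification. A secondary detail is to arrange the arrival order so that the easy terminals are processed before the hard ones, which lets the $O(1)$ contribution from the easy part be separated cleanly from the $\Omega(\eta)\cdot \opt(G',r,T')$ contribution from the hard part.
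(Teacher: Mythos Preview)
Your proposal is correct and follows essentially the same approach as the paper: embed the classical $\Omega(\eta)$ hard instance on $\eta$ terminals, pad with $k-\eta$ trivially cheap terminals that arrive first and are correctly predicted, and fill out $\predT$ with useless decoys so that the error is exactly $\eta$ while the prediction conveys no information about the hard subinstance. The paper's write-up is terser (it uses zero-cost easy edges and leaves the decoy construction implicit), but your added care about isolating the decoys and replaying the algorithm as a no-prediction algorithm on $G'$ is exactly the justification the paper's one-line contradiction argument relies on.
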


\begin{proof}[\textbf{Proof of Theorem~\ref{thm:undirected_lower_bound} and Theorem~\ref{thm:directed_lower_bound}}]
The two theorems share the same basic idea. 
Thus, we only take the undirected case for an example here.

Fix any $\eta$. Let $I$ be a known hard online instance on $\eta$ terminals used to give the $\Omega(\log(\eta))$ lower bound. Consider combining $I$ with $k-\eta$ terminals with $0$ cost edges. There are totally $k$ terminals. 
The $k-\eta$ zero-cost terminals arrive first and then the $\eta$ terminals arrive in adversarial order according to the hard instance $I$.

No online algorithm can have a competitive ratio better than $\Omega(\log (\eta))$ for this instance. Otherwise, such an algorithm would contradict the $\Omega(\log(\eta))$ lower bound for undirected Steiner tree. 
\end{proof}

\paragraph{Removing the assumption that $|\predT|=|T|$.} Notice that if $|\predT|$ is far larger than $|T|$, the error $\eta := k-|\predT \cap T|$ may not measure the quality of the prediction accurately. 
Thus, we redefine the prediction error $\eta':= \max (|\predT|,|T|)-|\predT \cap T|$ when removing the assumption that $|\predT|=|T|$. The analysis of our algorithms still holds given this new error. The number of terminals not in the prediction and the number of edges bought wrongly due to the prediction (e.g. the set $B_2$ in the undirected case) are still at most $\eta$.  This implies that the same competitive ratios can be obtained with  effectively the same analysis. For the robustness of our algorithm, our proofs have shown that algorithm IOAPT and the directed case algorithm will never perform asymptotically worse than the best traditional algorithms no matter what the prediction is given. Thus, regardless of the definition of the error, their robustness ratios are always $O(\log |T|)$ and $O(|T|)$, respectively.
\section{Additional Experimental Results}\label{sec:more_exp}

\subsection{Robustness to Accuracy in Different Settings}\label{sec:more_terminals}

This subsection shows the robustness performance of algorithms on random graphs under different $k$'s, , where $k$ is the number of terminals, and the performance on different road graphs fixing $k=2000$. From Fig.~\ref{fig:random_graph_more_k} and Fig.~\ref{fig:robust_road_graphs}, they share roughly the same trend. 

\begin{figure}[htbp]
\centering
\subfigure[$k=400$]{
\begin{minipage}{.3\textwidth}
    \centering
    \includegraphics[width=\textwidth]{ 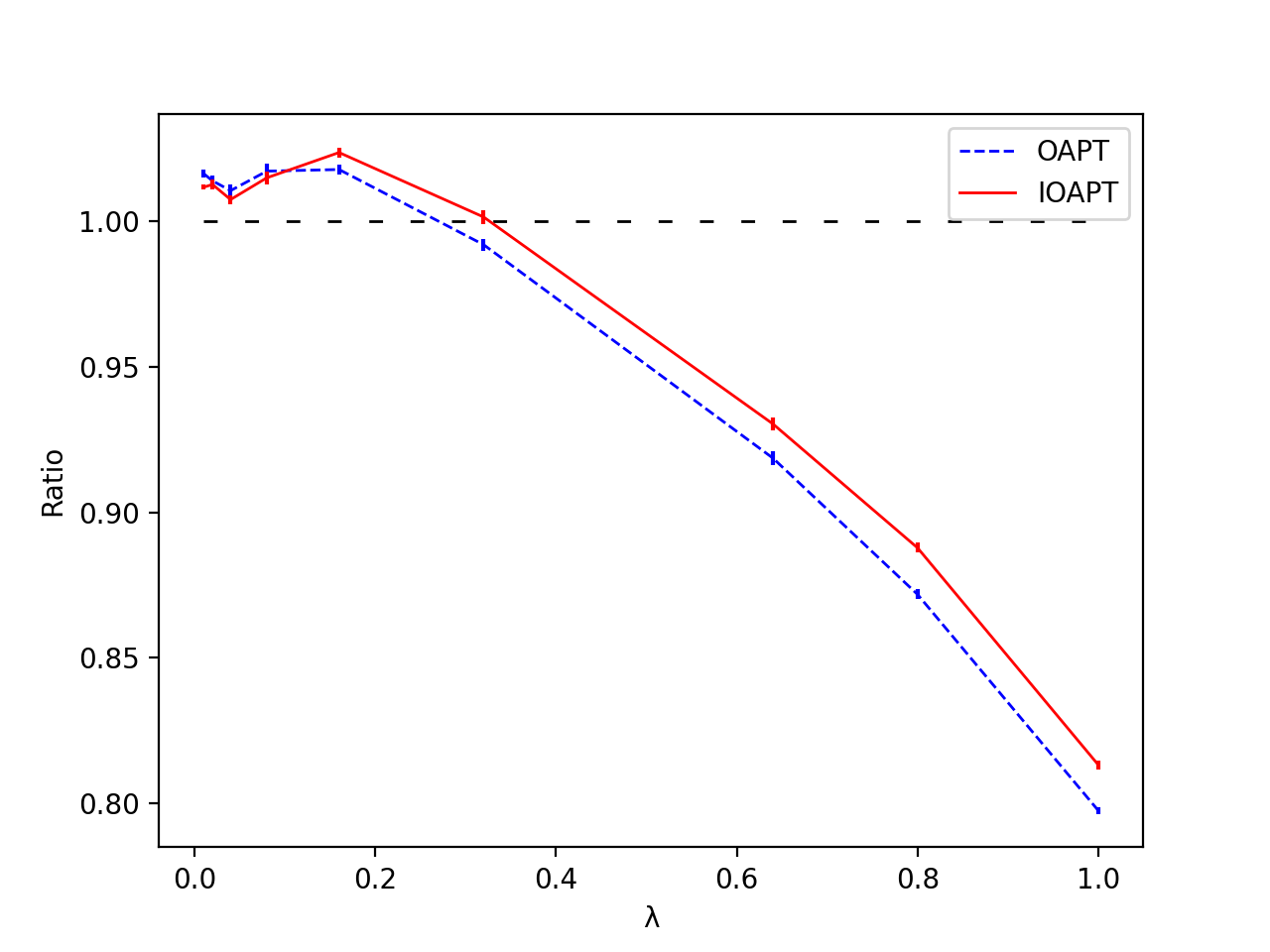}
    \label{fig:k400}
    \end{minipage} }
\subfigure[$k=600$]{
\begin{minipage}{.3\textwidth}
    \centering
    \includegraphics[width=\textwidth]{ 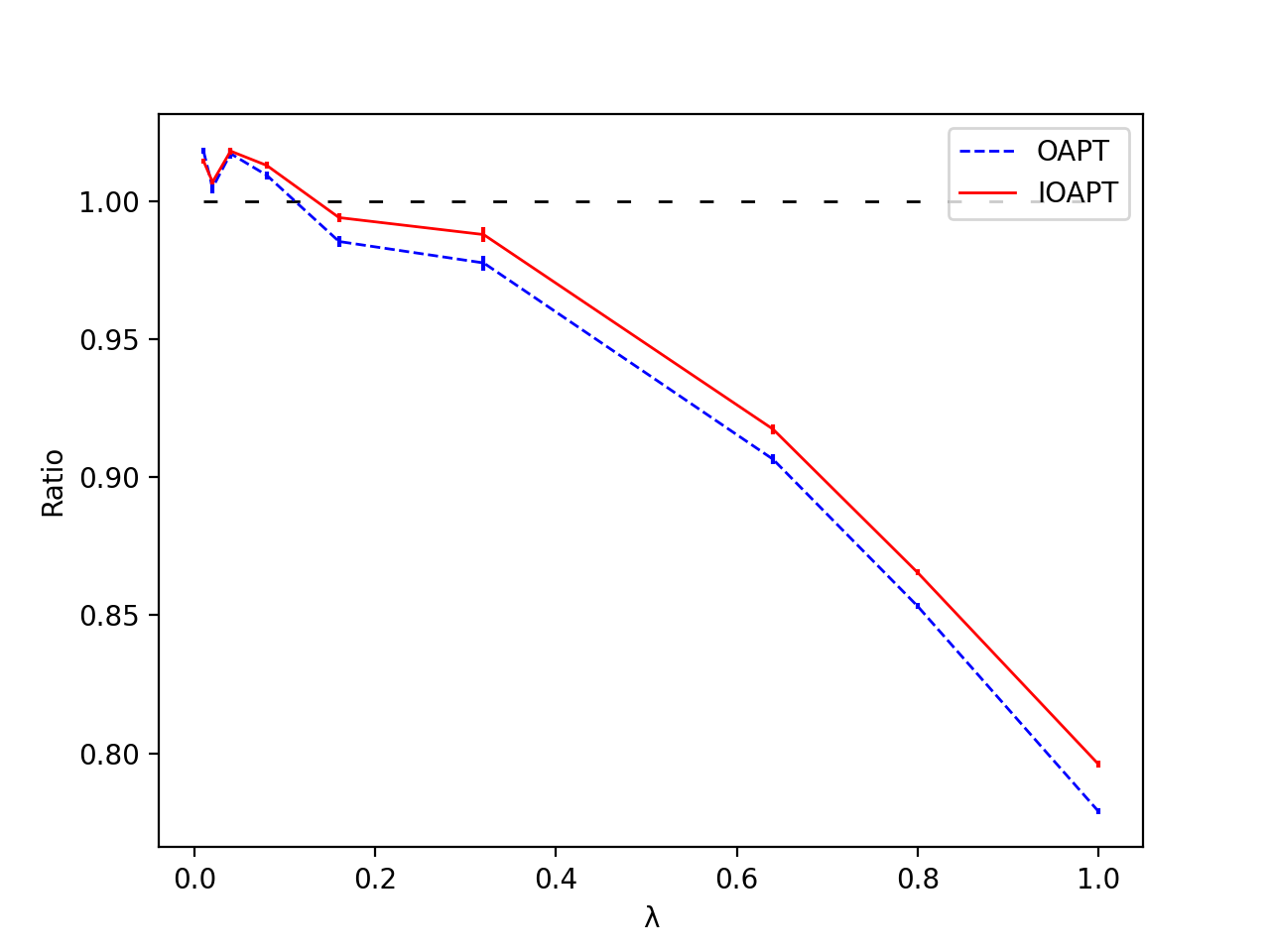}
    \label{fig:k600}
    \end{minipage}}
    
\subfigure[$k=800$]{  
    \begin{minipage}{.3\textwidth}
    \centering
    \includegraphics[width=\textwidth]{ 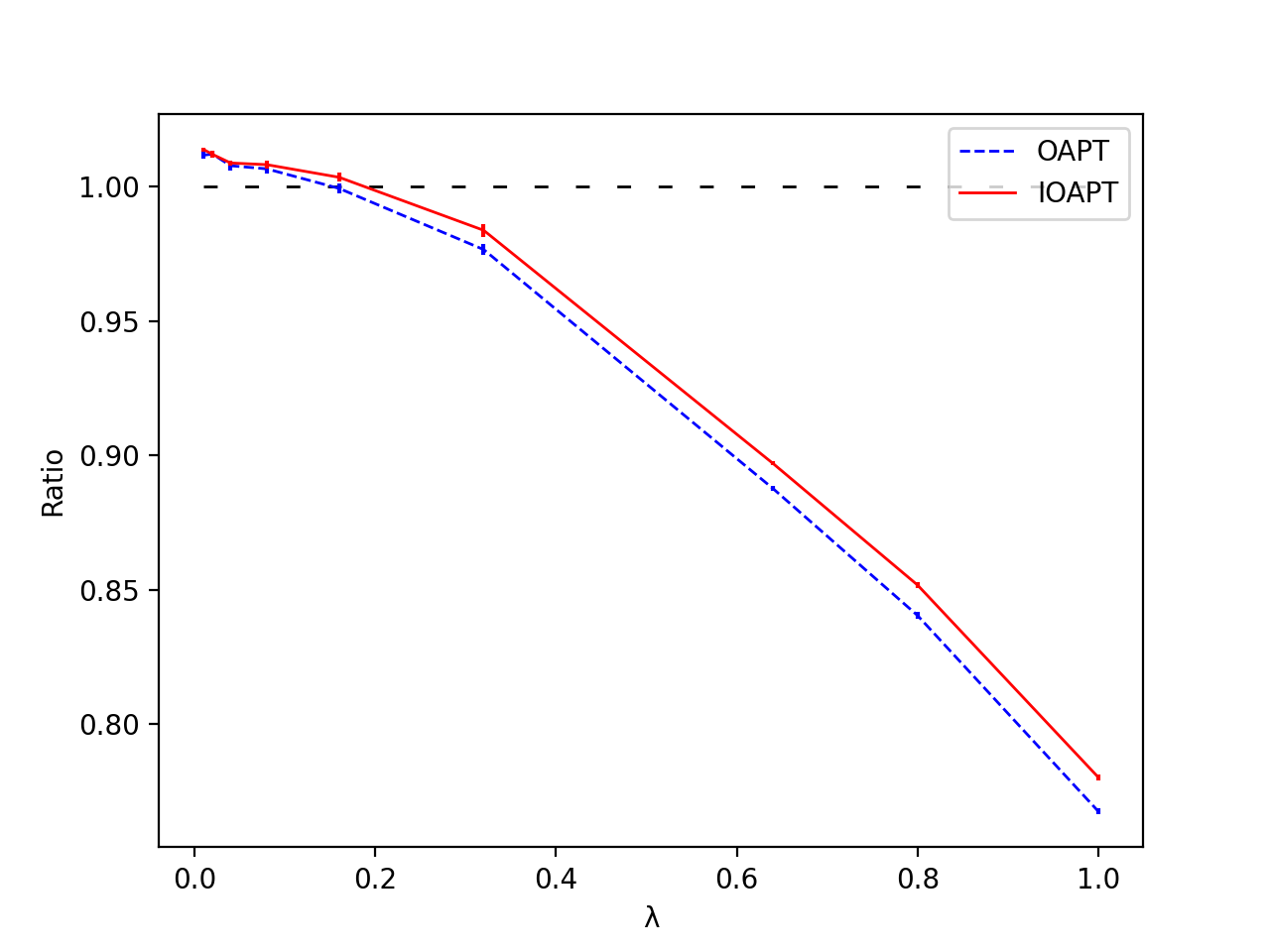}
    \label{fig:k800}
    \end{minipage}}
\subfigure[$k=1000$]{  
    \begin{minipage}{.3\textwidth}
    \centering
    \includegraphics[width=\textwidth]{ 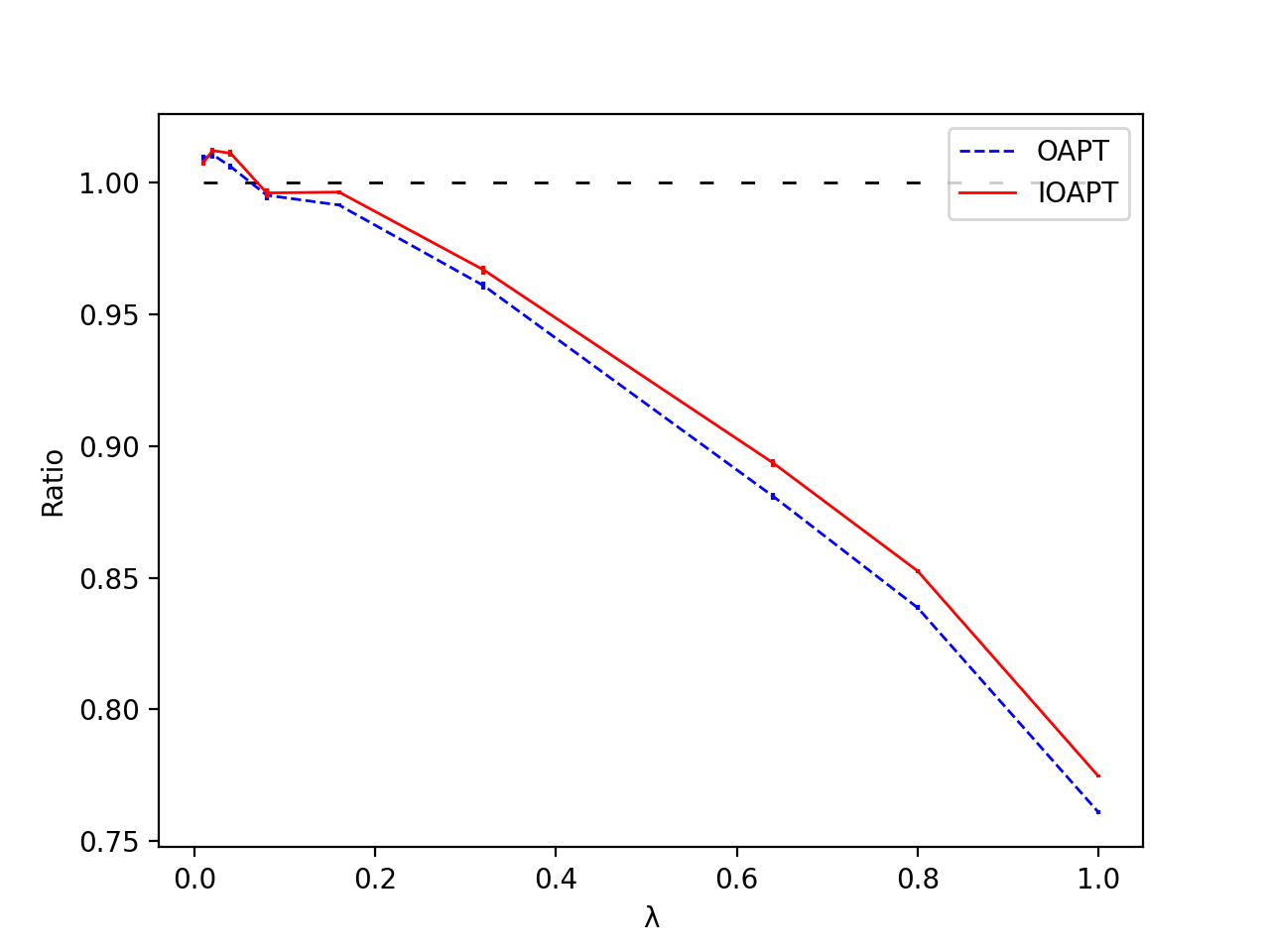}
    \label{fig:k1000}
    \end{minipage}}
    \caption{The performance of algorithms over different $\lambda$'s with different values of $k$.
    }
    \label{fig:random_graph_more_k}
\end{figure}

\begin{figure}[htbp]
\centering
\subfigure[]{
\begin{minipage}{.3\textwidth}
    \centering
    \includegraphics[width=\textwidth]{ Road_graph_robust_exp.png}
    \label{fig:robust_graph0}
    \end{minipage}}
\subfigure[]{
\begin{minipage}{.3\textwidth}
    \centering
    \includegraphics[width=\textwidth]{ 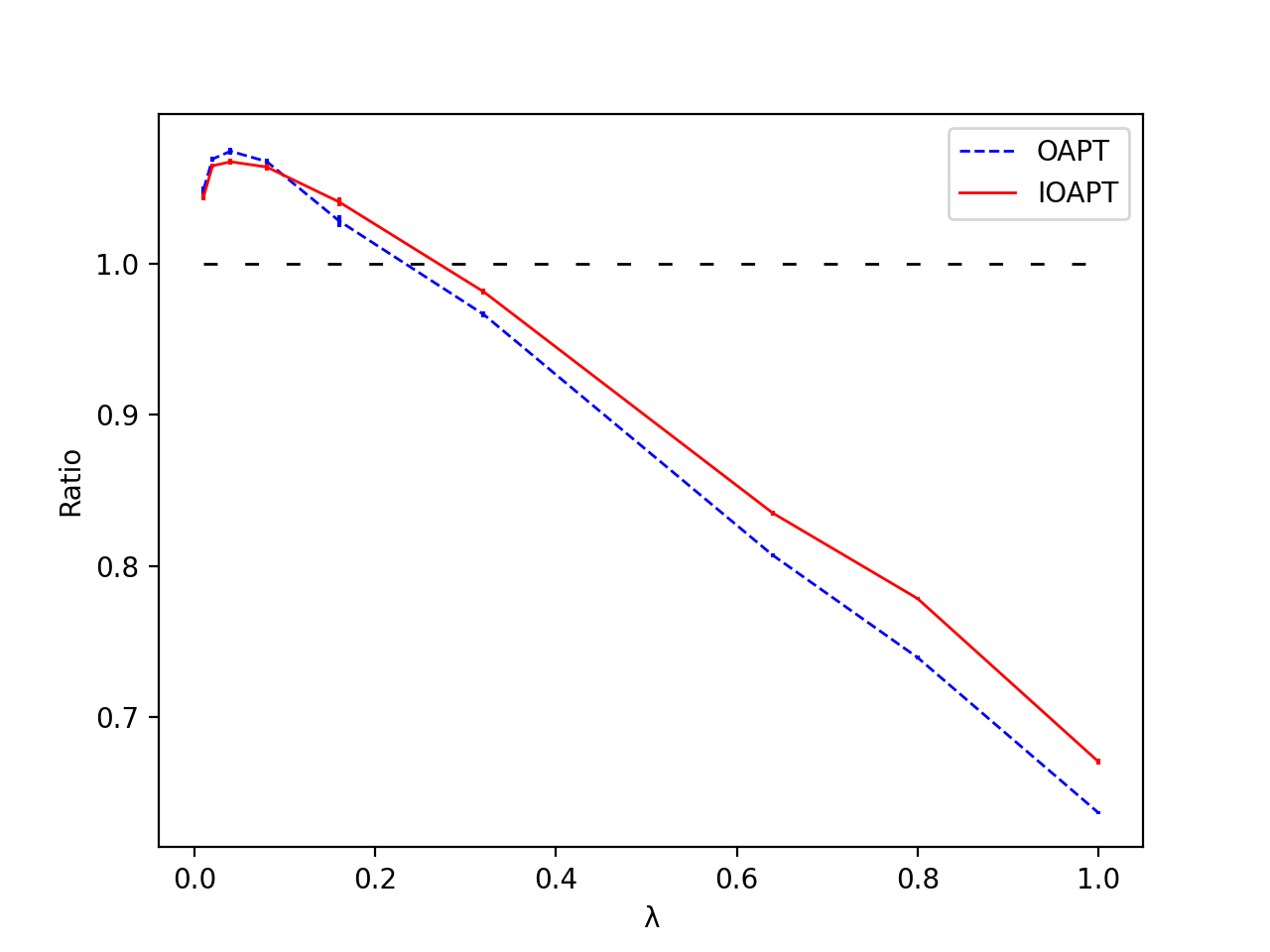}
    \label{fig:robust_graph1}
    \end{minipage}}
    
\subfigure[]{  
    \begin{minipage}{.3\textwidth}
    \centering
    \includegraphics[width=\textwidth]{ 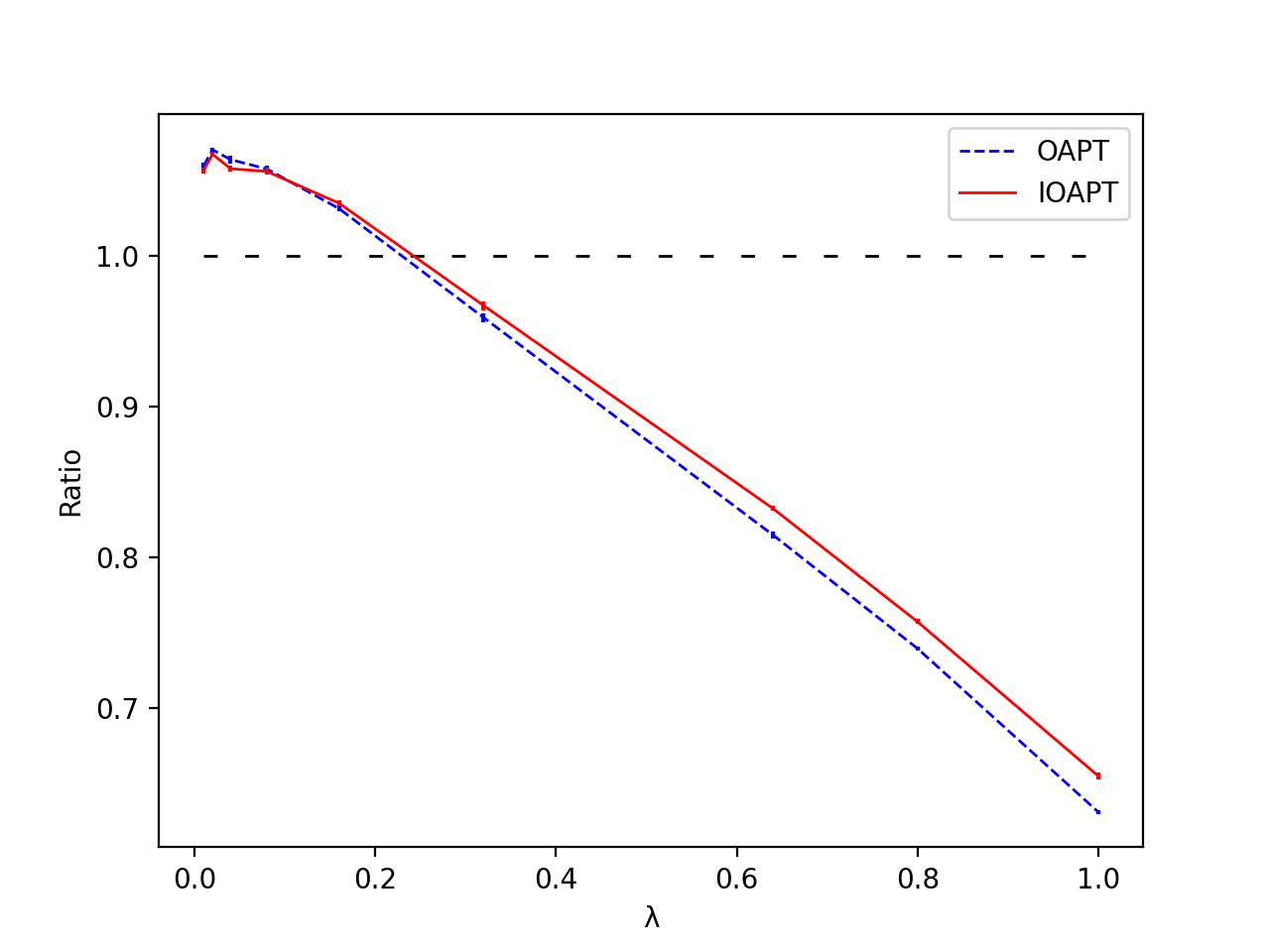}
    \label{fig:robust_graph2}
    \end{minipage}}
\subfigure[]{  
    \begin{minipage}{.3\textwidth}
    \centering
    \includegraphics[width=\textwidth]{ 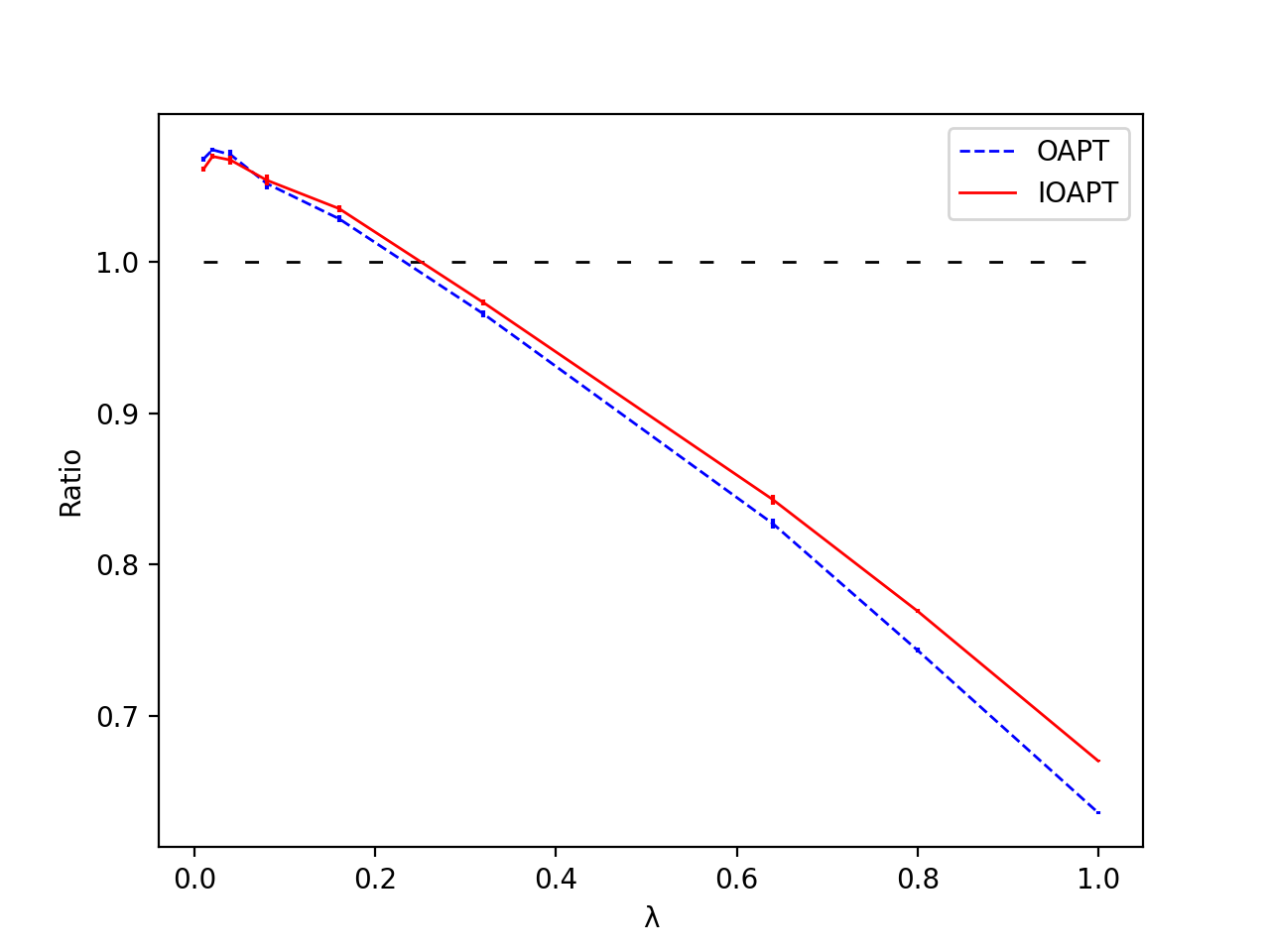}
    \label{fig:robust_graph3}
    \end{minipage}}
    \caption{The performance of algorithms over different $\lambda$'s on different road graphs. Note that Fig.~\ref{fig:robust_graph0} is the road graph shown in the main body.
    }
    \label{fig:robust_road_graphs}
\end{figure}

\subsection{Prediction Errors in the Learning Experiments}\label{sec:prediction_error}

\begin{figure}[htbp]
\centering
\subfigure[Corresponding to Fig.~\ref{fig:random_uniform}]{
\begin{minipage}{.3\textwidth}
    \centering
    \includegraphics[width=\textwidth]{ 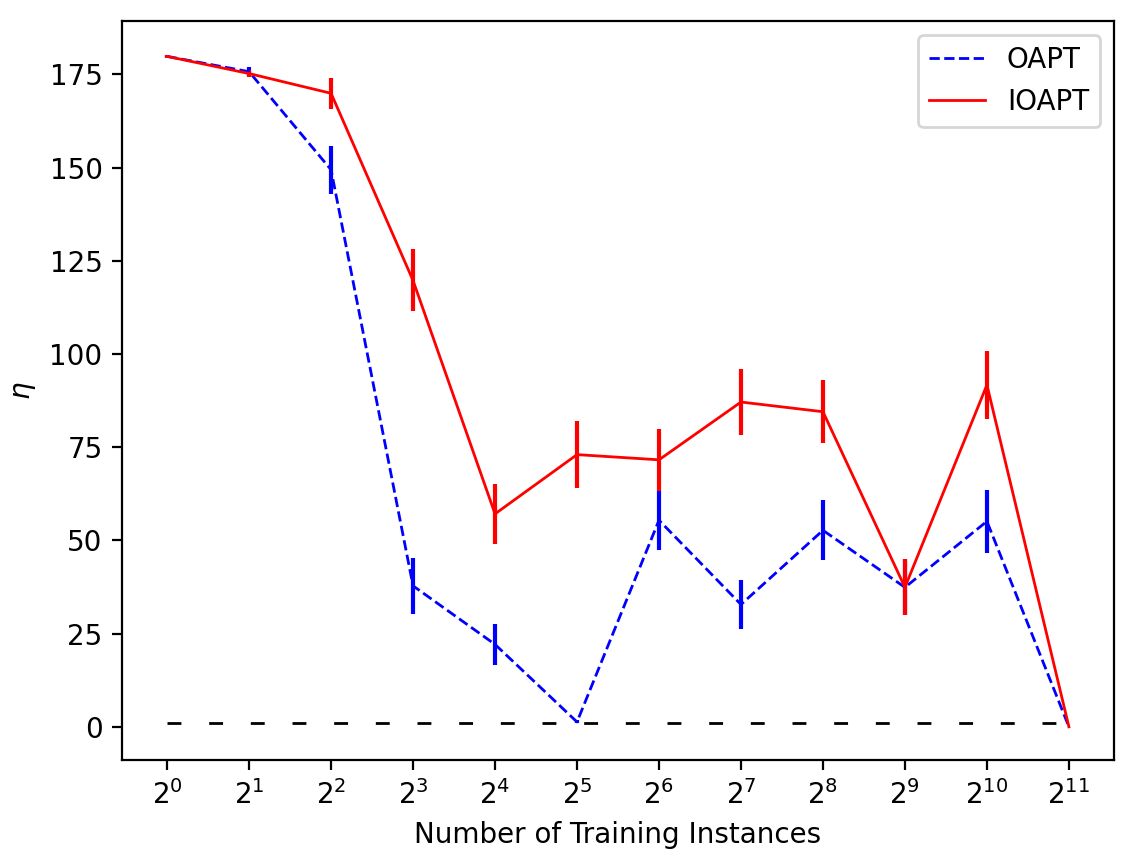}
    \label{fig:random_uniform_error}
    \end{minipage} }
\subfigure[Corresponding to Fig.~\ref{fig:random_good_distri}]{
\begin{minipage}{.3\textwidth}
    \centering
    \includegraphics[width=\textwidth]{ 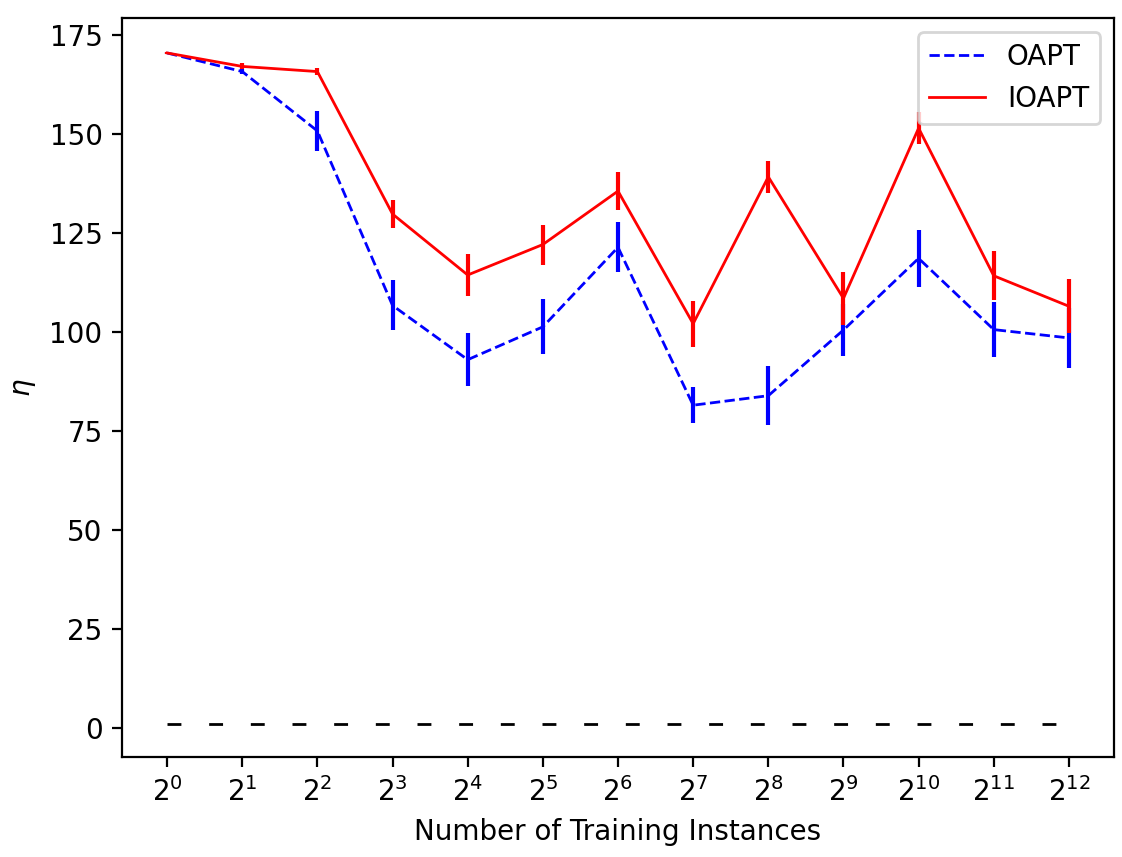}
    \label{fig:random_good_distri_error}
    \end{minipage}}
    
\subfigure[Corresponding to Fig.~\ref{fig:road_cluster_x10}]{  
    \begin{minipage}{.3\textwidth}
    \centering
    \includegraphics[width=\textwidth]{ 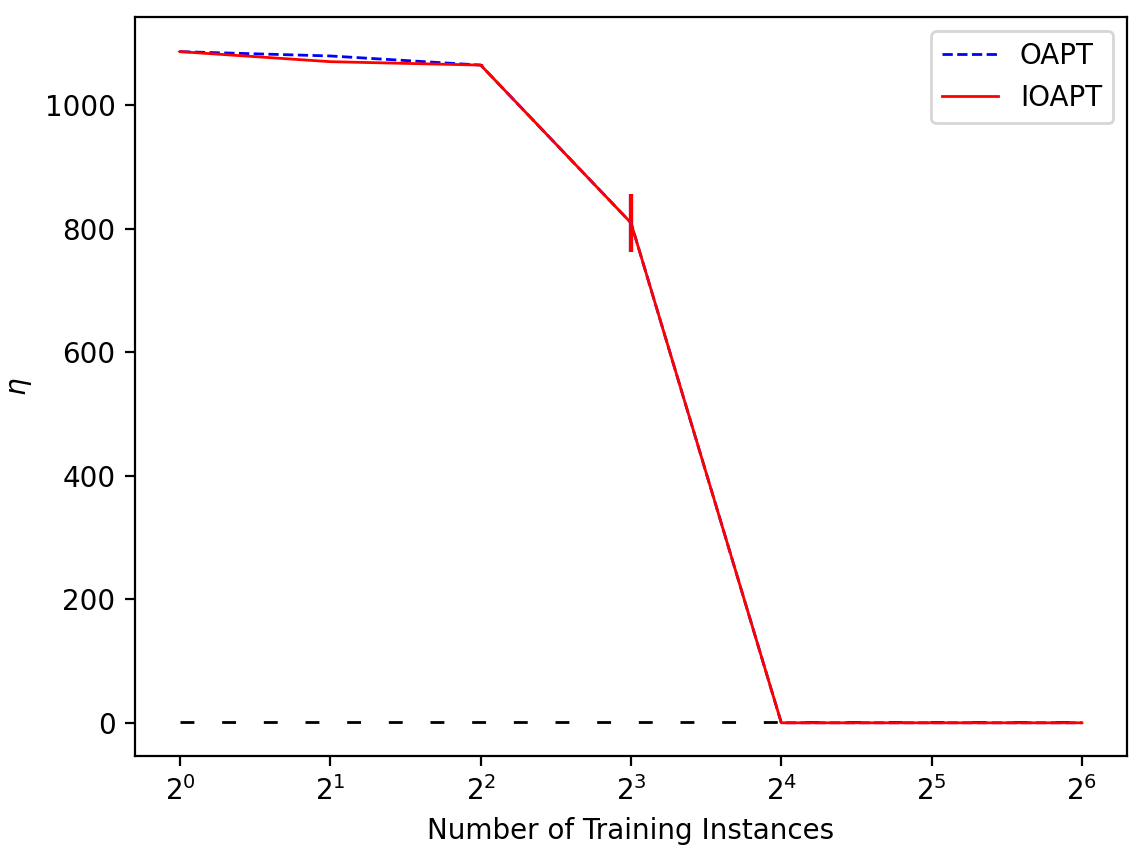}
    \label{fig:road_x10_error}
    \end{minipage}}
\subfigure[Corresponding to Fig.~\ref{fig:road_cluster_x10}]{  
    \begin{minipage}{.3\textwidth}
    \centering
    \includegraphics[width=\textwidth]{ 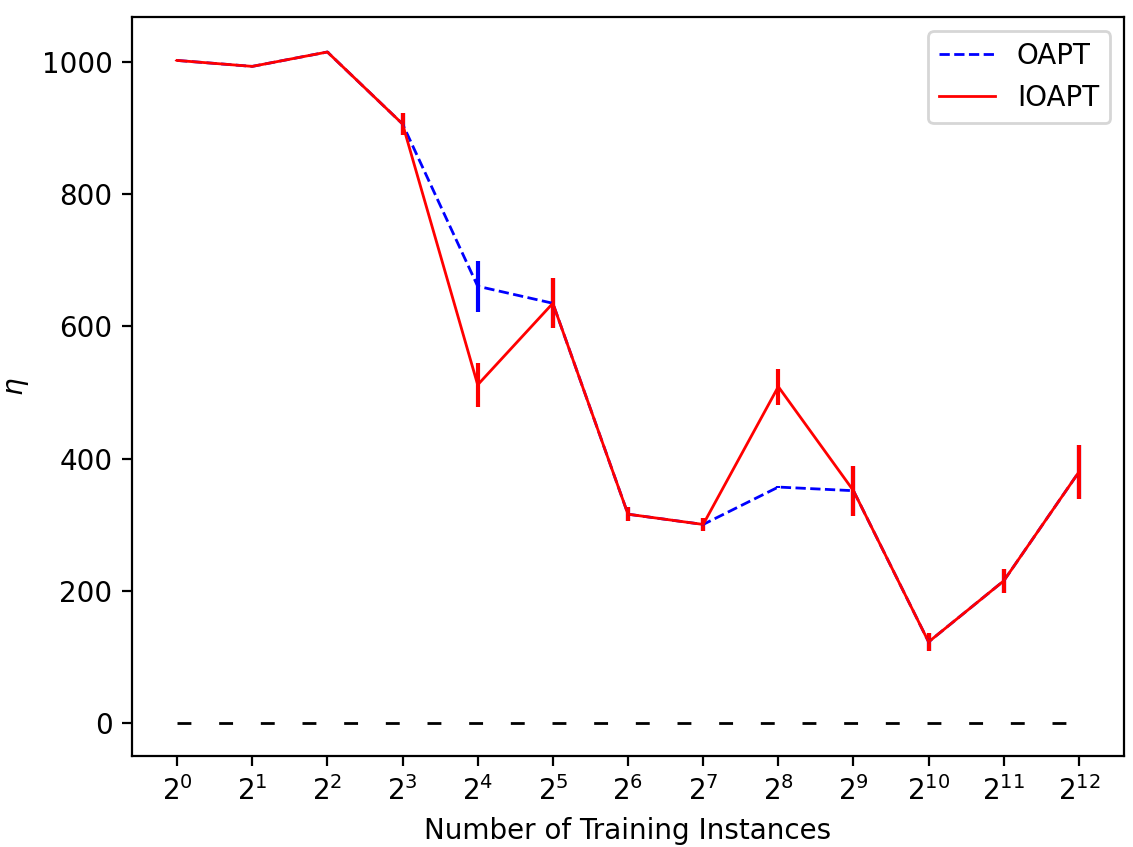}
    \label{fig:road_x100_error}
    \end{minipage}}
    \caption{The prediction error $\eta$ during the training.
    }
    \label{fig:pred_error}
\end{figure}

We present the prediction errors over different numbers of training instances. Noting that the number of predicted terminals may not equal $k$, we define $\eta$ be the number of wrong predicted terminals here, i.e., $\eta = |\predT \setminus T|$. We see the following trends:
\begin{itemize}
    \item For bad distributions (e.g. Fig.~\ref{fig:random_uniform_error}~\ref{fig:road_x10_error}), the learning algorithm predicts almost no terminals after observing tens of instance, resulting in tiny $\eta$ in the figures. Recalling the performance shown in Section~\ref{sec:exp}, when $|\predT|$ is small, the augmenting algorithms will switch to the greedy algorithm automatically and make the performance robust.
    
    \item For good distributions (e.g. Fig.~\ref{fig:random_good_distri_error}~\ref{fig:road_x100_error}), the learning algorithm quickly learns the nodes which become terminals with high probability. Although these predictions are still modestly accurate, they are sufficient for our algorithms to obtain comparable performance (recall Fig.~\ref{fig:random_good_distri}~\ref{fig:road_cluster_x100} in the main body).
\end{itemize}

\end{document}